\pgfplotsset{compat=1.18}
\newcommand\BibTeX{{\rmfamily B\kern-.05em \textsc{i\kern-.025em b}\kern-.08em
T\kern-.1667em\lower.7ex\hbox{E}\kern-.125emX}}
\def \SE {\textbf{SE}(3)}
\def \SO {\textbf{SO}(3)}
\def \se {\textbf{se}(3)}
\def \so {\textbf{so}(3)}
\def \R  {\mathbb{R}}
\def \g  {\mathfrak{g}}
\def \q  {\mathfrak{q}}
\def \h  {\mathfrak{h}}
\def \OOs {\Omega_\orbit}
\def \m  {\mathfrak{m}}
\def \E  {\mathfrak{e}}
\def \t  {\mathfrak{t}}
\def \V {\mathcal{V}}
\def \Ad {\textbf{Ad}}
\def \ad {\textbf{ad}}
\def \Add {\mathfrak{Ad}}
\def \L {\mathcal{L}}
\def \A {\mathcal{A}}
\def \sin {\text{sin}}
\def \cos {\text{cos}}
\def \orbit {\odot}
\def \diag {\text{diag}}
\def \bO {\mathbb{O}}
\def\tsc#1{\csdef{#1}{\textsc{\lowercase{#1}}\xspace}}
\newtheorem{remark}{Remark}
\theoremstyle{remark}
\newtheorem{definition}{Definition}
\newtheorem{theorem}{Theorem}
\newtheorem{lemma}{Lemma}
\newtheorem{Assumption}{Assumption}
\renewcommand{\qedsymbol}{$\blacksquare$}
\begin{document}
\sloppy
\let\WriteBookmarks\relax
\def\floatpagepagefraction{1}
\def\textpagefraction{.001}
\shorttitle{Lagrange-Poincar\'{e}-Kepler Equations of Disturbed Space-Manipulator Systems in Orbit}
\shortauthors{Borna Monazzah Moghaddam and Robin Chhabra}

\title [mode = title]{Lagrange-Poincar\'{e}-Kepler Equations of Disturbed Space-Manipulator Systems in Orbit}
\tnotemark[1]
\tnotetext[1]{This work was partially supported by the Natural Sciences and Engineering Research Council of Canada and the Canada Research Chair Program.}

\author{Borna Monazzah Moghaddam}


\author{Robin Chhabra}[orcid=0000-0001-7511-2910]

\cormark[1]
\ead{Email: robin.chhabra@torontomu.ca}

\cortext[cor1]{Corresponding author at: Department of Mechanical, Industrial, and Mechatronics Engineering, Toronto Metropolitan University, Toronto, ON M5B 2K3, Canada, Tel.:	 416-979-5000 ext. 554097}

\begin{abstract}
This article presents an extension of the Lagrange-Poincaré Equations (LPE) to model the dynamics of spacecraft-manipulator systems operating within a non-inertial orbital reference frame. Building upon prior formulations of LPE for vehicle-manipulator systems, the proposed framework—termed the Lagrange-Poincaré-Kepler Equations (LPKE)—incorporates the coupling between spacecraft attitude dynamics, orbital motion, and manipulator kinematics.
The formalism combines the Euler-Poincaré equations for the base spacecraft, Keplerian orbital dynamics for the reference frame, and reduced Euler-Lagrange equations for the manipulator’s shape space, using an exponential joint parametrization. Leveraging the Lagrange-d’Alembert principle on principal bundles, we derive novel closed-form structural matrices that explicitly capture the effects of orbital disturbances and their dynamic coupling with the manipulator system.
The LPKE framework also systematically includes externally applied, symmetry-breaking wrenches,  allowing for immediate integration into hardware-in-the-loop simulations and model-based control architectures for autonomous robotic operations in the orbital environment . To illustrate the effectiveness of the proposed model and its numerical superiority, we present a simulation study analyzing orbital effects on a 7-degree-of-freedom  manipulator mounted on a spacecraft. 
\end{abstract}


\begin{keywords}
Guidance, Navigation and Control\sep Space Robotics \sep On-Orbit Servicing\sep Artificial Intelligence\sep Geometric Mechanics
\end{keywords}

\maketitle

\section{Introduction}
 On-Orbit Servicing (OOS) (i.e. satellite repair, refueling, assembly, and debris mitigation) stands to benefit substantially from dexterous space manipulators (Fig.\ref{fig:orbitalmanipulator}), provided spacecraft mobility is managed robustly in the harsh orbital environment \citep{moghaddam2021guidance,zhang2013output,zong2020concurrent}. While in-space demonstrations of space manipulators remain limited, OOS is transitioning from concept to capability: operations with Canadarm2 aboard the ISS have validated microgravity manipulation on a very massive and inertia-dominant base, conditions under which base–arm coupling is comparatively subdued \citep{gibbs2002canada}.  Ongoing programs (e.g., planned European orbital servicing initiatives such as EROSS ~\citep{dubanchet2020eross,andiappane2019mission,roa2024eross,dubancheta2021eross}) signal rapid maturation but also underscore the need for principled dynamics and control frameworks tailored to on-orbit interaction \citep{janousek2021eros,flores2014review}. Compared with other proximity-operation technologies, robotic manipulators (predominantly teleoperated or supervised at the moment) offer predictable interaction control for docking, capture, and payload handling \citep{king2001space,aikenhead1983canadarm,sheridan1993}, yet safe performance hinges on accounting for orbital disturbances and non-inertial coupling between orbital motion, base attitude, and arm kinematics \citep{wee1997articulated,nenchev1992analysis,zong2021optimal,huang2006tracking,moghaddam2021guidance}.  
 In contrast to manipulators in micro-gravity (e.g. CanadArm), free-flying space manipulators experience strong reaction dynamics and pronounced orbit–attitude–arm coupling, motivating modeling tools that remain reliable in the absence of the stabilizing inertia of large platforms.

 Hence, to facilitate the robust and reliable operation of space manipulators, the interaction between the manipulator, the spacecraft, and the orbital motion must be carefully captured to ensure stability and precision \citep{sakawa1999trajectory}. A wide range of modeling and control methodologies aim to capture realistic orbital disturbances, e.g. relative motion with respect to an orbital reference frame, gravitational perturbations, and solar radiation pressure \citep{prussing1993orbit, kaplan1976modern}. 
Clohessy-Wiltshire equations (CW) are the most widely used model-based approach for capturing relative orbital dynamics (especially in rendezvous and synchronization scenarios \citep{Padial,kawano2001result,clohessy1960terminal,benninghoff2014autonomous}). Historically, orbital motion has been treated as a separate layer in modeling of space manipulator dynamics: either via linearized relative dynamics (CW and variants) \citep{clohessy1960terminal} or by direct two-body propagation in parallel with manipulator/base dynamics \citep{xu2024orbit,carr2022coupled,sincarsin1983gravitational,jiang2017integrated,curtis2013orbital}.
\begin{figure}
	\centering
		\includegraphics[scale=.45]{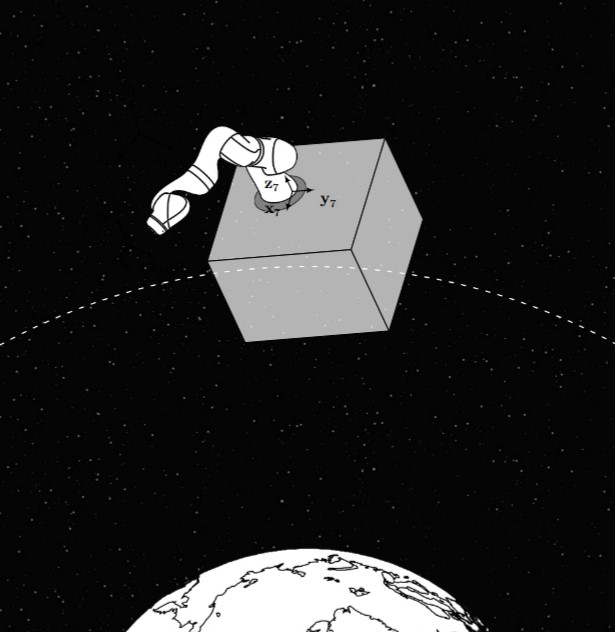}
	\caption{Space Manipulator in Orbit}
	\label{fig:orbitalmanipulator}
\end{figure}For in-orbit proximity operations, virtual-fixed-base viewpoints shaped early space-manipulator modeling and control approaches, including: the Generalized Jacobian Matrix (GJM) for base-arm coupling by Yoshida and Umetani \citep{umetani1987continuous}, and the virtual manipulator approach  by Vafa and Dubowsky \citep{vafa1987dynamics,dubowsky1993kinematics}. Classical modeling of the dynamics of mobile-manipulator systems has relied on recursive Newton-Euler algorithms \citep{muralidharan2019rendezvous,huang2017vehicle}, Lagrangian formulations \citep{dubowsky1993kinematics,antonelli2004adaptive}, and quasi-Lagrangian approaches \citep{olguin2013passivity,sarkar2001coordinated}. 
For the kinematics of open-chain multi-body manipulators the Product-of-Exponentials (POE) was introduced by Brockett, developed by Murray, Li, and Sastry \citep{brockett1984robotic,murray1994mathematical,park1994computational}, embedded into recursive Newton–Euler and Euler–Lagrange dynamics by Park \citep{park1994computational,park1995lie,park1994recursive,lee2005newton}, and later generalized to multi–DoF joints by Chhabra and Emami \citep{chhabra2014generalized}.  The exponential coordinates in these classical formulations \citep{ball1998treatise,borri2000representations,selig2005geometric} can often be susceptible to parametrization singularities for multi-DoF joints (i.e. spacecraft's pose) \citep{chhabra2014generalized}, though providing a minimal global representation for single-DoF joints of the manipulator \citep{herve1999lie,stramigioli2002geometry}. 
This motivated the utilization of global, coordinate-independent, and singularity-free frameworks such as the homogeneous transformation matrix \citep{gibson1990geometry,gibson1990geometry2,aspragathos1998comparative}, at the cost of added complexity in deriving dynamics. Lie groups provide a natural language to arrive at such singularity-free models. Duindam \textit{et al.} developed a local diffeomorphism based on the exponential map of the Special Euclidean group ($\SE$) that enabled singularity-free dynamics for systems with general multi-DoF joints \citep{duindam2008singularity}, facilitating the development of the Boltzmann–Hamel equations for constrained multibody systems \citep{stramigioli2001modeling,from2010singularity}.Through exploitation of Lie-group symmetries (i.e. invariance of the Lagrangian/Hamiltonian energy metric of the system under certain group actions) in reduction of mechanical systems, singularity-free Euler-Poincar\'{e} equations were developed that can describe a free-floating body's motion on the $\SE$ \cite{blochbook,bloch2001controlled}. Marsden and Ratiu established the foundational Poisson, Symplectic and Lagrangian reduction methods \citep{marsden1986reduction,marsden2007hamiltonian,marsden1990reduction,marsden2013introduction}. Scheurle–Marsden developed a Lagrangian quasi-velocity framework, yielding reduced Euler–Lagrange equations on the tangent bundle \cite{scheurle1993reduced,murray1997nonlinear,bloch1994reduction}\citep{scheurle1993reduced}. Bloch and McClamroch extended the framework to nonholonomic dynamics \cite{bloch1996nonholonomic,bloch1992control}. Hamel unified the Euler–Lagrange and Euler–Poincaré formalisms on local principal bundles, introducing the key notion of quasi-velocities \cite{hamel1904lagrange,hamel2013theoretische}.  Lastly, Cendra \textit{et al.} formulated the Lagrange–Poincaré equations (LPE) on $\SE$, yielding group-free, singularity-avoiding models that separate group (spacecraft) and shape (arm) dynamics without explicit pose coordinates \citep{cendra2001geometric,blochbook}.
From \textit{et al.} further advanced this framework by introducing minimal quasi-coordinates, producing singularity-free formulations applied to complex mechanisms, including underwater and space manipulators \citep{from2012explicit,from2012explicit2,from2011singularityspace,from2010singularityauv}.
In space robotics, Chhabra–Emami applied symplectic/Chaplygin reductions for planning and control \citep{chhabra2015symplectic,chhabra2014nonholonomic}; and Mishra \textit{et al.} employed LPE for free-floating spacecraft–manipulator dynamics, with block-diagonal inertia structure exploited for simulation and observation \citep{mishra2020inertia,de2021relative}. However, rigorous treatments that \emph{explicitly} couple orbital motion, base attitude, and arm kinematics for GNC-relevant operating conditions have emerged only recently \citep{da2017attitude,xu2024orbit}. There have been Geometric formulations (e.g. LPE) for similar mobile manipulators such as underwater vehicle-manipulator systems \citep{antonelli2004adaptive,sarkar2001coordinated}, which offer methodological analogies for space manipulators (reduction, mechanical connection, momentum maps), while the governing physics differ from orbital settings (e.g., hydrodynamics and buoyancy vs. microgravity, high orbital rates, and symmetry-breaking orbital perturbations).

\medskip
Though existing geometric treatments of free-floating space manipulators preserve Lie‐group structure, they almost always incorporate orbital motion as an external layer (via CW-type linearizations or separate two-body propagation) rather than embedding it in the reduced equations themselves \citep{da2017attitude,xu2024orbit}. By contrast, we propose a combined set of \emph{Lagrange-Poincar\'e-Kepler Equations} that: (i) injects closed-form Keplerian evolution directly into the LPE , (ii) clearly links to classical space-robotics mappings (e.g., GJM) in momentum-conserving conditions, and (iii) works within a quasi-inertial/orbital frame. The formulation remains coordinate–free on principal bundles and uses standard reduction tools \citep{moghaddam2022,murray1994mathematical,lynch2017modern,blochbook,scheurle1993reduced}, while its orbital coupling is expressed via closed–form anomalies rather than separate orbital ODE integration \citep{curtis2013orbital}.

 The major contributions of this work are as follows.
\begin{enumerate}
  \item \textbf{We rigorously embed Kepler's equations of orbital mechanics into the reduced LPE equations for free-floating manipulators in the form of explicit symmetry-breaking disturbances.}
  Thus, we address a longstanding gap in space manipulator modeling: where explicit utility of orbital motion has typically been limited to rendezvous and docking \citep{zong2020optimal,shi2022coupled,mao2018reachable}, coupled orbit/attitude dynamics of satellites without manipulators \citep{xu2024orbit,carr2022coupled,sincarsin1983gravitational,jiang2017integrated}.

  \item \textbf{We formally specialize the LPE on $\textbf{SE}(2)\subset\SE$ to incorporate planar orbital motion \citep{curtis2013orbital,viswanathan2012dynamics}, }
   deriving a complete matrix-form of the disturbed equations \citep{curtis2013orbital,viswanathan2012dynamics}.

  \item \textbf{We derive a numerically well-conditioned, non-stiff and singularity-robust LPKE, } through (i) avoiding singularities due to base parameterization \citep{moghaddam2022,murray1997nonlinear,marsden1974reduction,stramigioli2007geometric,mishra2022reduced,from2011singularityspace}, (ii) mitigating the stiffness due to the large disparity between orbital and manipulator motion by formulating the dynamics in a quasi–inertial/orbital frame,\citep{moghaddam2022,murray1997nonlinear,marsden1974reduction,stramigioli2007geometric,mishra2022reduced,from2011singularityspace}, and (iii) incorporating a more robust  analytic Keplerian evolution over a full orbital ODE propagation \citep{curtis2013orbital,xu2024orbit,sincarsin1983gravitational}.
\end{enumerate}
The proposed LPKE complements prior geometric reductions for free‐floating manipulators while preserving the adjoint and metric factorizations emphasized in geometric mechanics \citep{moghaddam2022,murray1994mathematical,blochbook}. In doing so, the proposed connection $\mathcal{A}$ provides a precise bridge to the Generalized Jacobian Matrix (GJM) used in space robotics, retaining its interpretability while explicitly accounting for symmetry‐breaking orbital inputs. Finally, the coordinate‐free, principal‐bundle formulation maintains compatibility with modern geometric robotics toolchains \citep{lynch2017modern}, facilitating integration with planning and control frameworks posed on reduced manifolds. The result is a single, singularity–free reduced model that \emph{natively} couples orbit, attitude, and arm dynamics.

\medskip
This paper is structured as follows: 
We provide preliminaries on the kinematics of relative rigid motion in Section \ref{foundations}. In Section \ref{kinematics}, we extend the kinematic formulation in \citep{moghaddam2022} to orbital space manipulators. The LPKE of spacecraft-manipulator systems with respect to a moving frame is rigorously detailed in Section \ref{sec4}. Validation framework, simulation results, and discussions are provided in Section \ref{noninertial: Numerical Study}. Concluding remarks are included in Section \ref{conclusion}.

\section{Preliminaries of Relative Motion} \label{foundations}

In this section we provide the fundamental concepts of relative motion in multi-body systems (for further detail, refer to \citep{moghaddam2022}). The relative pose between Body $i$ and Body $j$ ($i,j \in \{0,1,\cdots,n\}$) in a multi-body system is expressed as $g^j_i={\begin{bmatrix} R^j_i & {}^jp^j_i\\ \mathbb{O}_{1\times3} & 1\end{bmatrix}} \in \SE$, where \(R^j_i \in \SO\) and \({}^jp^j_i \in\mathbb{R}^3\) represent the relative orientation and position. The symbol \(\mathbb{O}\) denotes a zero matrix. The Special Euclidean (\(\SE\)) group and the Special Orthogonal (\(\SO\)) group are matrix Lie groups providing global representations of pose and orientation. The relative velocity of Body $i$ with respect to Body $j$ in Body $i$'s frame is given by \({}^i\hat{V}^j_i={\begin{bmatrix}{}^i\tilde{\omega}^j_i & {}^iv^j_i\\ \bO_{1\times 3} & 0\end{bmatrix}}=(g^j_i)^{-1}\dot{g}^j_i \in\se\), where \({}^i{\omega}^j_i\) and \({}^iv^j_i\) are the relative angular and linear velocities. The \textit{tilde} operator transforms a vector \({}^i{\omega}^j_i\in\R^3\) to a skew-symmetric matrix \({}^i\tilde{\omega}^j_i\in \so\), the Lie algebra of \(\SO\). The \textit{hat} operator maps between \(\R^6\) and \(\se\), the Lie algebra of \(\SE\), such that ${}^iV^j_i=\begin{bmatrix} ({}^i{\omega}^j_i)^T & ({}^iv^j_i)^T\end{bmatrix}^T\in\R^6$. 
The \textit{Adjoint matrix} $\Ad_{g_i^ j}:={\begin{bmatrix}R_i^ j & (^ j\tilde{p}_i^ j) R_i^ j \\\mathbb{O}_{3 \times 3}& R_i^ j\end{bmatrix}}$ transforms velocity vectors from frame $i$ to frame $j$, such that \( ^j{V}_i^j=\Ad_{g^j_i}{}^i{V}_i^j\). The \textit{adjoint operator} $\textbf{ad}_{{}^jV^j_i}(\cdot):=[{^j\hat{V}^j_i},\hat{(\cdot)}]^\vee={\begin{bmatrix}^j\tilde{\omega}_i^{j} & ^j\tilde{v}_i^{j} \\\mathbb{O}_{3\times 3}& ^j\tilde{\omega}_i^{j}\end{bmatrix}}$, and the map \(\tilde{\textbf{ad}}_{P_j}:={\begin{bmatrix}\mathbb{O}_{3\times 3} & \tilde f_j\\\tilde f_j & \tilde \tau_j\end{bmatrix}}:\R^6\rightarrow (\R^6)^*\) for every $P_j=[f_j^T~~\tau_j^T]^T\in (\R^6)^*$ such that \(\tilde\ad_{P_j}{}^jV_i^j= \ad_{{}^jV_i^j}^TP_j\). The relative pose between successive Bodies $i$ and $i-1$ connected via a 1-DoF joint can be parameterized using the exponential map of $\SE$:
\begin{equation}
g_i^{i-1}=e^{{}^{i-1}\hat{\xi}^{i-1}_iq_i}\bar{g}^{i-1}_i\label{expparam}
\end{equation}
where \({}^{i-1}\xi^{i-1}_i\in \R^6\) is the joint twist in Body $i-1$, \(\bar g_i^{i-1}\in\SE\) is the initial relative pose, and \(q_i\) is the joint parameter \citep{murray1994mathematical}.

\section{Kinematics of Spacecraft-Manipulator Systems in Orbit}
\label{kinematics}
This section summarizes a kinematic framework for spacecraft-manipulator systems in orbital environment based on \citep{moghaddam2022}, developed by integrating Lie group representations of multi-body systems with the POE formula \citep{chhabra2014generalized}. We model the space manipulator as a serial-link open-chain system, observed relative to a moving orbital frame $\orbit$. The system consists of $n+1$ bodies, indexed by $\{0,1,\cdots, n\}$. Motion of the orbital frame is formulated relative to a quasi-inertial (non-accelerated) frame $I$. 
Reference frames are attached to each link at its preceding joint, and inertial parameters are expressed in these local frames. The pose of the orbital frame $\orbit$ relative to the quasi-inertial frame $I$ is described by a displacement sub-group joint (Rigorously defined in Definition 1 of \citep{moghaddam2022}), with the $3$-dimensional restricted relative configuration manifold $Q^I_\orbit$ \citep{chhabra2014generalized}. The matrix Lie group $\textbf{SE}(2)$ provides a global identification of this manifold, through an isomorphism $\iota_\orbit$ and left translation by a fixed relative pose $\bar{g}^I_\orbit \in Q^I_\orbit$, with members of the form $\breve{g}^I_\orbit \in \textbf{SE}(2)$. We define $\breve{\iota}(.) : \textbf{SE}(2) \rightarrow \SE$ such that $\breve{\iota}(\breve{g}^I_\orbit)=g^I_t \in Q_\orbit^I$.  We will directly find the evolution of $\breve{g}^I_\orbit$ as a function of time, and thus, will directly work with members of $\textbf{SE}(2)$ for formulation of kinematics and dynamics to avoid parametrization singularities. The pose of the base spacecraft, indexed by $0$, relative to the inertial frame $g^I_0=g^I_\orbit g^\orbit_0 \in Q^I_0$ with $g^\orbit_0 \in Q^\orbit_0$ where $Q^\orbit_0$ is the 6-dimensional relative configuration manifold corresponding to the motion of the spacecraft relative to orbit. 
We have $Q_\mathfrak{m}$ represent the $n$-dimensional configuration manifold of the manipulator, with elements consisting of sets of joint parameters $q_\mathfrak{m}=(q_1,\cdots,q_n)\in Q_\mathfrak{m}$. Consequently, the $(6+n)$-dimensional configuration manifold of the space manipulator relative to the quasi-inertial frame is a Cartesian product of the two aforementioned manifolds $Q:= Q^I_0 \times Q_\mathfrak{m}$ \citep{chhabra2014generalized}.

\subsection{ Orbital Motion}\label{sec:orbitmotion}
Here, we brief the orbital mechanics fundamentals utilized for this study (For rigorous explanation refer to \citep{curtis2013orbital}). The  Earth-centered  Perifocal coordinate frame $\oplus$ is fixed at the Center-of-Mass (CoM)  of Earth, with its axes $x_\oplus$ and $y_\oplus$ in the plane of the orbit (perifocal frame), and $z_\oplus$ perpendicular to the plane of the orbit (in the direction of the orbital angular velocity). The axis $x_\oplus$ is directed from the center of Earth towards the orbit periapsis, and $y_\orbit$ complete the frame (see Figure \ref{fig:orbitalframes}). This frame matches the initial pose of the orbital frame so that $R^I_\orbit(t=0)=\mathbb{I}_3$. The quasi-inertial frame $I$ is an inertial constant-velocity frame, initially matching the pose of the orbital frame $g^I_\orbit(t=0)=\mathbb{I}_4$, and moving with a constant linear velocity equal to the initial linear velocity of orbital frame ${}^\oplus v_I^\oplus={}^{\oplus}{v}^{\oplus}_\orbit(t=0)=const.$ and ${}^\oplus w_I^\oplus=0$ (see Figure \ref{fig:orbitalframes}). The orbital frame (indexed by $\orbit$) has its $x_\orbit$ and $y_\orbit$ axes in the perifocal plane, with $x_\orbit$ directed radially away from the Earth 
and $z_\orbit=z_\oplus$ (see Figure \ref{fig:orbitalframes}). 
Within the perifocal frame, $\breve{p}_\orbit={}^\oplus \breve{p}^\oplus_\orbit=\begin{bmatrix}x_\orbit&y_\orbit\end{bmatrix}^T$ and $\theta=\theta_\orbit^I$ are the perifocal linear and angular position of the orbital frame, respectively. Similarly, $\breve{v}_\orbit={}^\oplus \breve{v}^\oplus_\orbit=\begin{bmatrix}v_{\orbit x}&v_{\orbit y}\end{bmatrix}^T$ and ${}^\oplus \breve{w}^\oplus_\orbit=\dot{\theta}$ are the perifocal Cartesian linear and angular velocity of the orbit, respectively. Knowing the initial position and velocity of the orbit, orbit's angular momentum can be found $\bar{\mu}_\orbit=\breve{p}_\orbit\times{}\breve{v}_\orbit=\begin{bmatrix}0&0&\mu_\orbit\end{bmatrix}^T$. The orbit eccentricity $\breve{e}_\orbit=\frac{1}{\mathcal{G}m_\oplus} \left(\left(|\breve{v}_\orbit|^2-\frac{\mathcal{G}m_\oplus}{|\breve{p}_\orbit|}\right)\breve{p}_\orbit-|\breve{p}_\orbit| \left((\breve{p}_\orbit)^T\breve{v}_\orbit\right) \breve{v}_\orbit\right)$ is a geometric descriptor of ellipticity of the orbit and $e_\orbit=|\breve{e}_\orbit|$, and the true anomaly $\bar{\theta}=\theta(t=0)=\cos^{-1}\left(\frac{\textbf{e}_\orbit^T \breve{p}_\orbit}{e_\orbit |\breve{p}_\orbit|}\right)$ represents the angular distance between $x_\oplus$ and $x_\orbit$\citep{curtis2013orbital}. 

\begin{lemma}
(Evolution of the orbital parameter $\theta$) \label{orbitdyn}
The True Anomaly $\theta$ of an undisturbed elliptic orbit in the perifocal frame with the eccentricity $e_\orbit$ and constant orbital angular momentum $\mu_\orbit$ as a function of time is found from the well-known Kepler's equations as \citep{curtis2013orbital}:
\begin{equation*}
    \theta(t)=2 \tan^{-1}\left(\frac{1-e_\orbit}{1+e_\orbit}\tan(E(t))\right), 
\end{equation*}
\begin{equation}
    E(t)=\frac{1}{2}\textbf{MA}^{-1}\left(E(\bar{\theta})-e_\orbit \sin(E(\bar{\theta}))\right)+\frac{(\mathcal{G}m_\oplus)^2}{\mu_\orbit^3}t),\label{noninertial:trueanomalyevolution}
\end{equation}
where the Mean Anomaly function $\textbf{MA}(E):=E-e_\orbit \sin(E)$ is the fraction of orbital period since periapsis, $E=2\tan^{-1}\left(\frac{1+e_\orbit}{1-e_\orbit}\tan\left(\frac{1}{2}\theta\right)\right)$ is the eccentric anomaly, 
$\mathcal{G}m_\oplus$ is the gravitational constant of the Earth, and $\bar{\theta}=\theta|_{t=0}$.
\end{lemma}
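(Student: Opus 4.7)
\bigskip\noindent\textbf{Proof proposal.} The plan is to reduce the statement to the classical derivation of Kepler's equation in the perifocal plane, after which the stated time dependence follows from monotonicity of the mean-anomaly map. Because $\mu_\orbit$ is conserved, the trajectory is confined to the $(x_\oplus,y_\oplus)$-plane, and polar coordinates $(r,\theta)$ with $r=|\breve{p}_\orbit|$ fully describe it. The starting point is the planar two-body equation $\ddot{\breve{p}}_\orbit=-(\mathcal{G}m_\oplus/r^3)\,\breve{p}_\orbit$ together with the definitions of $\breve{e}_\orbit$ and $\mu_\orbit$ given immediately before the lemma.

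First I would combine conservation of angular momentum, $r^2\dot{\theta}=\mu_\orbit$, with the standard first integral of the radial equation to recover the conic-section orbit
\begin{equation*}
r(\theta)=\frac{\mu_\orbit^2/(\mathcal{G}m_\oplus)}{1+e_\orbit\cos\theta},
\end{equation*}
whose semi-major axis is $a=\mu_\orbit^2/[\mathcal{G}m_\oplus(1-e_\orbit^2)]$. This reduces the dynamics to the scalar ODE $\dot{\theta}=\mu_\orbit/r(\theta)^2$, which I would separate as $dt=(r^2/\mu_\orbit)\,d\theta$ and integrate from $t=0$ to $t$, using the initial condition $\theta(0)=\bar{\theta}$.

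Second, I would introduce the eccentric anomaly through the half-angle relation $\tan(\theta/2)=\sqrt{(1+e_\orbit)/(1-e_\orbit)}\,\tan(E/2)$, i.e.\ the geometric parametrization of the ellipse by its auxiliary circle. Under this substitution $r=a(1-e_\orbit\cos E)$ and the integrand collapses to an elementary form in $E$. Direct integration and collection of terms yields Kepler's equation in the form
\begin{equation*}
\textbf{MA}\!\left(E(t)\right)-\textbf{MA}\!\left(E(\bar{\theta})\right)=n\,t,
\end{equation*}
where $\textbf{MA}(E)=E-e_\orbit\sin E$ and $n$ is the mean motion, expressible purely through $\mathcal{G}m_\oplus$ and $\mu_\orbit$ via the formula for $a$, matching the scaling appearing inside $\textbf{MA}^{-1}$ in (\ref{noninertial:trueanomalyevolution}).

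Finally, since $\textbf{MA}'(E)=1-e_\orbit\cos E>0$ for $e_\orbit<1$, the map $\textbf{MA}:\R\to\R$ is a strictly increasing diffeomorphism and admits a global inverse $\textbf{MA}^{-1}$; applying it recovers $E(t)$ in the form stated in the lemma, and inverting the half-angle relation then expresses $\theta(t)$ in the claimed $2\tan^{-1}$ form. The only genuine obstruction is symbolic, namely that $\textbf{MA}^{-1}$ has no closed elementary expression and must be left implicit (to be evaluated numerically, e.g.\ by Newton iteration, in Section~\ref{noninertial: Numerical Study}); no ingredient from the reduction machinery of Section~\ref{sec4} is required here, since the lemma concerns only the unperturbed Keplerian base flow that will subsequently drive the LPKE.
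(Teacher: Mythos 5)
Your proposal is correct, but it takes a more self-contained route than the paper. The paper's proof (Appendix~\ref{app:orbitkinematics}) does not derive Kepler's equation at all: it simply quotes from \citep{curtis2013orbital} the three standard ingredients --- the half-angle relation between $\theta$ and $E$, the linear-in-time evolution of the mean anomaly $\mathcal{MA}_\orbit(t)=\mathcal{MA}_\orbit(0)+nt$, and the relation $\mathcal{MA}=\textbf{MA}(E)=E-e_\orbit\sin E$ --- and then composes them through the formally defined inverse $\textbf{MA}^{-1}$. You instead start from the planar two-body ODE, obtain the conic $r(\theta)$ from the angular-momentum and eccentricity integrals, separate $\dot{\theta}=\mu_\orbit/r^2$, and carry out the eccentric-anomaly substitution to \emph{derive} $\textbf{MA}(E(t))-\textbf{MA}(E(\bar{\theta}))=nt$; you also supply the one genuinely nontrivial justification the paper omits, namely that $\textbf{MA}'(E)=1-e_\orbit\cos E>0$ makes $\textbf{MA}$ a global diffeomorphism for $e_\orbit<1$, so $\textbf{MA}^{-1}$ is well defined. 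The paper's approach buys brevity and defers all analysis to a textbook; yours buys a proof that actually establishes the claim and, as a side effect, exposes the precise constants. On that last point, be aware that your derivation gives the mean motion as $n=\sqrt{\mathcal{G}m_\oplus/a^3}=(\mathcal{G}m_\oplus)^2(1-e_\orbit^2)^{3/2}/\mu_\orbit^3$ and the substitution $\tan(\theta/2)=\sqrt{(1+e_\orbit)/(1-e_\orbit)}\,\tan(E/2)$, whereas the lemma (and the appendix, which uses yet another coefficient $\sqrt{\mu_\orbit^3}/\mathcal{G}m_\oplus$) drops the factor $(1-e_\orbit^2)^{3/2}$ and the square roots in the half-angle relations; your claim that these ``match'' holds only in the near-circular limit used in the simulations, so you should either restore those factors or state the approximation explicitly rather than asserting exact agreement.
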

\begin{proof}
The proof is provided in  Appendix \ref{app:orbitkinematics} . 
\end{proof}
\begin{figure}
    \centering
    \includegraphics[width=0.99\columnwidth]{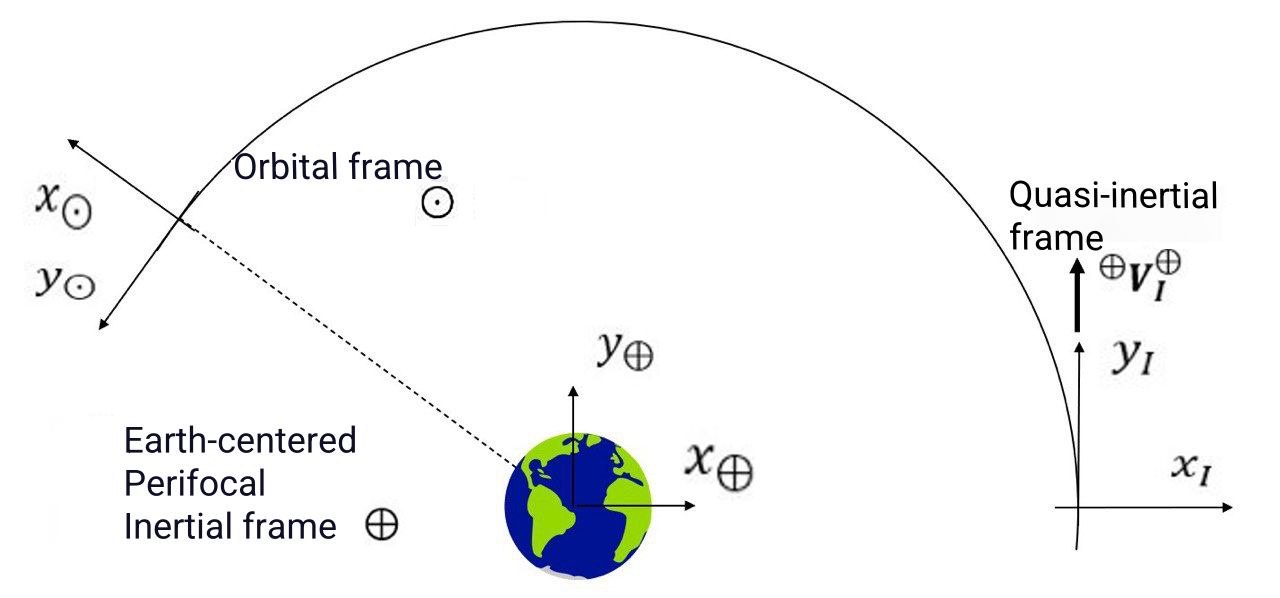}
    \caption[Orbital Coordinate Frames]{Quasi-Inertial  Frame $I$, Perifocal frame $\oplus$, and Orbital frame $\orbit$}
    \label{fig:orbitalframes}
\end{figure}

\begin{lemma} \label{orbitkinlemma}(Evolution of orbit pose and velocity in the quasi-inertial frame) Knowing $\theta(t)$ at each instant, orbit  eccentricity $e_\orbit$, orbit angular momentum $\mu_\orbit$ and velocity of the quasi-inertial frame ${}^\oplus \V^\oplus_I=[{v}_x~ {v}_y ~0]^T=const.$, the absolute pose of the orbital frame with respect to the quasi-inertial frame ${\breve{g}}^I_\orbit \in \textbf{SE}(2)$ is calculated as ${\breve{g}}^I_\orbit(\theta)=\begin{bmatrix} \breve{R}^I_\orbit & {}^I \breve{p}^I_\orbit\\ \mathbb{O}_{1\times2} & 1\end{bmatrix}$ ,  
with \citep{curtis2013orbital}
\begin{equation*}
    \breve{R}^I_\orbit=\begin{bmatrix}\cos(\theta-\bar{\theta}) & \sin(\theta-\bar{\theta})\\ -\sin(\theta-\bar{\theta}) & \cos(\theta-\bar{\theta})\end{bmatrix} 
\end{equation*}
\begin{multline}
        {}^I \breve{p}^I_\orbit=\frac{\mu_\odot ^2}{\mathcal{G}m_\oplus}\begin{bmatrix}\cos(\bar{\theta}) &\sin(\bar{\theta})\\ -\sin(\bar{\theta}) & \cos(\bar{\theta})\end{bmatrix}\\
        \begin{bmatrix}\frac{\cos(\theta)}{1+e_\odot \cos(\theta )}-\frac{\cos(\bar{\theta})}{1+e_\odot \cos(\bar{\theta} )}-\sin(\bar{\theta})t\\ \frac{\sin(\theta)}{1+e_\odot \cos(\theta )}-\frac{\sin(\bar{\theta})}{1+e_\odot \cos(\bar{\theta} )}-\left(e+\cos(\bar{\theta})\right)t\end{bmatrix}\label{eq:orbitpose}.
\end{multline}

Orbit's velocity ${}^\orbit \V^I_\orbit$ with respect to the quasi-inertial frame is:
\begin{equation}
    {}^\orbit\V_{\orbit}^I(\theta)=\frac{\mathcal{G}m_\oplus }{\mu_\orbit}\begin{bmatrix}  -\sin(\theta-\bar{\theta})\\ 1-\cos(\theta-\bar{\theta})\\\frac{\mathcal{G}m_\oplus} {\mu_\orbit^2}(1+e_\orbit \cos(\theta))^2\end{bmatrix}^T. \label{orbitvel}
\end{equation}
with $\bar{\theta}=\theta|_{t=0}$. Lastly, we can see $g^I_\orbit=\breve{\iota}(\breve{g}^I_\orbit)$ and ${}^\orbit V^I_\orbit=\iota_\orbit {}^\orbit \V^I_\orbit$.
\end{lemma}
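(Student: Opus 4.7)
The plan is to establish the formulas for $\breve{g}^I_\orbit(\theta)$ and ${}^\orbit\V^I_\orbit(\theta)$ by first writing the orbit's position and velocity in the Earth-centered perifocal frame $\oplus$, where standard Keplerian expressions are available in closed form, and then transporting these quantities to the quasi-inertial frame $I$ via the fixed rotation $\breve{R}^I_\oplus$ and the uniform translation generated by the constant velocity ${}^\oplus\breve{v}^\oplus_I = {}^\oplus\breve{v}^\oplus_\orbit(0)$. The final leg is a short Lie-algebraic computation that converts the inertial-frame kinematics back to the body-frame spatial velocity ${}^\orbit\V^I_\orbit$.

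First I would recall from Section~\ref{sec:orbitmotion} that the orbital frame's $x_\orbit$ axis points radially outward at the instantaneous true anomaly $\theta$, while at $t=0$ it is aligned with $x_I$ at the angle $\bar\theta$ measured from $x_\oplus$. This fixes $\breve{R}^I_\oplus$ as the planar rotation by $-\bar\theta$ and, taking the product $\breve{R}^I_\orbit = \breve{R}^I_\oplus\breve{R}^\oplus_\orbit$ with $\breve{R}^\oplus_\orbit$ a planar rotation by $\theta$, immediately yields the stated formula for $\breve{R}^I_\orbit$ as a rotation by $\theta-\bar\theta$. Next, starting from the conic-section expression $r(\theta)=\frac{\mu_\orbit^2/\mathcal{G}m_\oplus}{1+e_\orbit\cos\theta}$ and the perifocal coordinates ${}^\oplus\breve{p}^\oplus_\orbit(t) = r(\theta)[\cos\theta,\sin\theta]^T$, I would write
\begin{equation*}
{}^I\breve{p}^I_\orbit(t) \;=\; \breve{R}^I_\oplus\bigl({}^\oplus\breve{p}^\oplus_\orbit(t)-{}^\oplus\breve{p}^\oplus_\orbit(0)-{}^\oplus\breve{v}^\oplus_\orbit(0)\,t\bigr),
\end{equation*}
since $I$ moves uniformly with velocity ${}^\oplus\breve{v}^\oplus_\orbit(0)$ from the initial orbit location. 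Substituting the standard perifocal-frame expression ${}^\oplus\breve{v}^\oplus_\orbit = \frac{\mathcal{G}m_\oplus}{\mu_\orbit}[-\sin\theta,\,e_\orbit+\cos\theta]^T$ evaluated at $\theta=\bar\theta$, and factoring out the semi-latus rectum $\mu_\orbit^2/\mathcal{G}m_\oplus$, reproduces the bracketed vector in~\eqref{eq:orbitpose}.

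For the spatial velocity ${}^\orbit V^I_\orbit$, I would use ${}^\orbit\hat{V}^I_\orbit = (g^I_\orbit)^{-1}\dot{g}^I_\orbit$, which gives ${}^\orbit\tilde\omega^I_\orbit = (\breve{R}^I_\orbit)^T\dot{\breve{R}}^I_\orbit$ and ${}^\orbit v^I_\orbit = (\breve{R}^I_\orbit)^T{}^I\dot{\breve{p}}^I_\orbit$. The angular part follows from $\dot\theta = \mu_\orbit/r^2 = \frac{(\mathcal{G}m_\oplus)^2}{\mu_\orbit^3}(1+e_\orbit\cos\theta)^2$, which is the scalar ``$\omega$'' entry after factoring $\mathcal{G}m_\oplus/\mu_\orbit$ out of the row vector in~\eqref{orbitvel}. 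The linear part reduces to $(\breve{R}^I_\orbit)^T\breve{R}^I_\oplus\bigl({}^\oplus\breve{v}^\oplus_\orbit(\theta)-{}^\oplus\breve{v}^\oplus_\orbit(\bar\theta)\bigr)$, and since $(\breve{R}^I_\orbit)^T\breve{R}^I_\oplus = \breve{R}^\orbit_\oplus$ is the planar rotation by $-\theta$, a short trigonometric simplification (products of sines/cosines collapsing via the angle-difference identities) produces exactly $\frac{\mathcal{G}m_\oplus}{\mu_\orbit}[-\sin(\theta-\bar\theta),\,1-\cos(\theta-\bar\theta)]^T$. Finally, the identifications $g^I_\orbit=\breve\iota(\breve{g}^I_\orbit)$ and ${}^\orbit V^I_\orbit=\iota_\orbit{}^\orbit\V^I_\orbit$ are immediate from the definitions of $\breve\iota$ and $\iota_\orbit$ in Section~\ref{kinematics}.

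The main obstacle is not conceptual but bookkeeping: keeping the distinction between the perifocal frame $\oplus$ (where Kepler's relations are sharp) and the quasi-inertial frame $I$ (which drifts linearly relative to $\oplus$), together with carefully tracking whether a rotation acts by $\theta$, $\bar\theta$, or $\theta-\bar\theta$. The trigonometric collapse in the spatial velocity calculation, while mechanical, is where sign errors are easiest to introduce, so I would organize that step as one explicit product $\breve{R}^\orbit_\oplus\cdot\bigl[{}^\oplus\breve{v}^\oplus_\orbit(\theta)-{}^\oplus\breve{v}^\oplus_\orbit(\bar\theta)\bigr]$ and verify the two components independently.
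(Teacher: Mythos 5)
Your proposal is correct and follows essentially the same route as the paper's Appendix~B proof: both start from the closed-form perifocal Keplerian position $r(\theta)=\tfrac{\mu_\orbit^2/\mathcal{G}m_\oplus}{1+e_\orbit\cos\theta}$ and velocity, compose with the quasi-inertial frame's uniform drift $\left({}^\oplus\breve p^\oplus_I(0)+{}^\oplus\breve v^\oplus_\orbit(0)\,t\right)$ to obtain the relative pose, and obtain ${}^\orbit\V^I_\orbit$ by subtracting the constant frame velocity, rotating into the orbital frame, and inserting $\dot\theta=\mu_\orbit/r^2$ for the angular component. The only cosmetic difference is that you derive the linear part by differentiating the pose via $(g^I_\orbit)^{-1}\dot g^I_\orbit$ rather than writing $R_{z_\orbit}(\theta)\bigl({}^\oplus\V^\oplus_\orbit(\theta)-{}^\oplus\V^\oplus_I\bigr)$ directly, which is the same computation.
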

\begin{proof}
Proof is provided in  Appendix \ref{app:relposevel} .
\end{proof}

\subsection{Space Manipulator Kinematics}
\label{forwardkinematics}
Given a base spacecraft configuration $g^\orbit_0$ and an orbit configuration $g^I_\orbit$ such that $g^I_\orbit g^\orbit_0\in Q_0^I$, and a set of joint angles $q_\mathfrak{m} \in Q_\mathfrak{m}$, the pose of any joint/body relative to a chosen frame is derived using the cascade of relative poses between intermediate bodies \citep{park1994computational}. 
The relative pose between Body $i \in {1,...,n}$ and the quasi-inertial frame $I$ is obtained as $g^I_{i}(g^I_0,q_\mathfrak{m})=g^I_{\orbit}g^\orbit_0g^0_{i}(q_\mathfrak{m})$, where the orbit's configuration is $g^I_\orbit=\breve{\iota}(\breve{g}^I_\orbit) \in Q^I_\orbit$ is found from \eqref{eq:orbitpose}. The POE formula in \eqref{expparam} leads to $g^0_i(q_\mathfrak{m})=e^{\hat{\xi}_{1}q_{1}}\ldots e^{\hat{\xi}_{i}q_{i}}\bar{g}^0_{i}$ \citep{murray1994mathematical,lynch2017modern}, where $\bar{g}^0_i=\bar{g}^0_1\cdots \bar{g}^{i-1}_i$ is the fixed pose of Body $i$ relative to the spacecraft (commonly chosen to be the initial pose). Also, 
\begin{equation}
  {\xi}_i:= \Ad_{\bar{g}^0_{i-1}}{}^{i-1}{\xi}^{i-1}_i\in\mathfrak{g}_0^\vee, \label{twist}
\end{equation}
is the manipulator's joints' twists in the spacecraft's initial frame \citep{murray1994mathematical}.
The end-effector $n$ forward kinematics is $g^I_n(g^I_0,q_\mathfrak{m})=g^I_\orbit g^\orbit_0e^{\hat{\xi}_{1}q_{1}}...e^{\hat{\xi}_{n}q_{n}}\bar{g}^0_{n}$. Pose of the CoM  of each Body $i$ with respect to $I$, denoted by $g^I_{cm,i}=g^I_i\bar{g}^i_{cm,i}=g^I_\orbit g^\orbit_0e^{\hat{\xi}_{1}q_{1}}...e^{\hat{\xi}_{i}q_{i}}\bar{g}^0_{cm,i}$, where $\bar{g}^i_{cm,i}$ and $\bar{g}^0_{cm,i}=\bar{g}^0_{i}\bar{g}^i_{cm,i}$  are the constant poses of the CoM of Body $i$ relative to their preceding joint and the spacecraft in its initial configuration, respectively.  
Let $ \mathfrak{se}(2)^\vee \cong \q^0_0 \subseteq \g_0$ be the Lie algebra of $\textbf{SE}(2) \cong Q_0:=(\bar{g}^I_0)^{-1}Q^I_0\subseteq G_0$ with the isomorphism $\iota_0:  \mathfrak{se}(2)^\vee\rightarrow \q_0$ with elements of the form $\V_\orbit \in \textbf{se}(2)^\vee$ \citep{chhabra2014generalized}. The differential kinematics of the spacecraft-manipulator system is based on the left-trivialization of $TQ^I_0$ and $TQ_\m$, resulting in elements of the form $(V_\orbit +V_0,q_\m,\dot{q}_\mathfrak{m})\in Q^I_0 \times  TQ^I_0\times TQ_{\mathfrak{m}}$. Here, we have defined $V_0={}^0V^\orbit_0$ and  
\begin{equation}V_\orbit={}^0 V_\orbit^I=\Ad_{g^\orbit_0}\iota_\orbit{}^\orbit\V_\orbit^I.
\end{equation} 
The inclusion map for the inertial restricted velocity is 
$$\iota_\orbit={\begin{bmatrix}1 & 0 &0 &0 & 0 & 0\\0 & 1 &0 &0 & 0 & 0\\0 & 0 &0 &0 & 0 & 1\\\end{bmatrix}}^T.$$

\begin{lemma}
In matrix form, the velocity of Body $i$ relative to the quasi-inertial frame can be expressed in the quasi-inertial, spacecraft or Body $i$ coordinate frame as \citep{murray1994mathematical}

\begin{align}
 &^iV_i^I={}^iJ^I_i\begin{bmatrix}V_\orbit+V_0\\\dot{q}_\mathfrak{m}\end{bmatrix} ~ \& ~{}^IV_i^I= {}^IJ^I_i\begin{bmatrix}V_\orbit+V_0\\\dot{q}_\mathfrak{m}\end{bmatrix} \\
 &^0V_i^I={}^0J^I_i\begin{bmatrix}V_\orbit+V_0\\\dot{q}_\mathfrak{m}\end{bmatrix}, \label{bodyvelocity}
\end{align}
where the spacecraft Jacobian matrix is
\begin{equation}
    ^0J^I_i(q_\m)\!=\!\begin{bmatrix} \mathbb{I}_6\!\!\!  &  {\xi}_{1}\,\cdots\, \Ad_{e^{\hat{\xi}_{1}q_{1}}\cdots e^{\hat{\xi}_{i-1}q_{i-1}}}{\xi}_i\!\!\! & \bO_{6\times (n-i)}\end{bmatrix},
    \label{baseJacobian}
\end{equation}
and the quasi-inertial, spacecraft and body Jacobian matrices are
\begin{align}
^I&J^I_i(g^I_\orbit,g^\orbit_0,q_\m)=\Ad_{g^I_\orbit(\theta)g^\orbit_0} {}^0J^I_i\nonumber\\  
    ^\orbit &J^I_i(g^\orbit_0,q_\m)=\Ad_{g^\orbit_0} {}^0J^I_i\nonumber\\ 
    {}^i&J^I_i(q_\m)=\Ad_{(g^0_i)^{-1}}{}{}^0J^I_i,
    \label{noninertial:bodyJacobian}
\end{align}
where the twists ${\xi}_i$ ($i=1,\ldots,n$) have been defined by \eqref{twist}.
\end{lemma}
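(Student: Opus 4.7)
The plan is to derive all three Jacobian representations from one source computation. Because $V_\orbit$ and $V_0$ are both defined in the spacecraft frame, the most natural starting point is the spacecraft-frame form ${}^0 V^I_i$; the other two representations in \eqref{noninertial:bodyJacobian} will then follow by a single adjoint transport. The starting identity is the forward-kinematics chain $g^I_i = g^I_\orbit\,g^\orbit_0\,g^0_i(q_\m)$ together with the POE factorization $g^0_i = e^{\hat\xi_1 q_1}\cdots e^{\hat\xi_i q_i}\,\bar g^0_i$.

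First I would differentiate $g^I_i$ using Leibniz's rule to obtain three additive terms, then left-multiply by $(g^I_i)^{-1} = (g^0_i)^{-1}(g^\orbit_0)^{-1}(g^I_\orbit)^{-1}$ to compute the body velocity ${}^i\hat V^I_i$. Two of the three terms are immediately recognizable as adjoint conjugates of the orbit's body velocity ${}^\orbit\hat V^I_\orbit$ and the base's body velocity ${}^0\hat V^\orbit_0$, via the defining relation ${}^i\hat V^j_i=(g^j_i)^{-1}\dot g^j_i$; the remaining term $(g^0_i)^{-1}\dot g^0_i$ is the body-frame expression of the POE chain's self-motion. Transporting into the spacecraft frame by $\Ad_{g^0_i}$ yields
\[
{}^0 V^I_i \;=\; \Ad_{(g^\orbit_0)^{-1}}{}^\orbit V^I_\orbit \;+\; V_0 \;+\; \bigl(\dot g^0_i\,(g^0_i)^{-1}\bigr)^\vee,
\]
and the first summand is exactly $V_\orbit$ by the definition preceding the lemma.

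Second, I would invoke the standard POE spatial-Jacobian identity
\[
\dot g^0_i\,(g^0_i)^{-1} \;=\; \sum_{j=1}^{i}\Ad_{e^{\hat\xi_1 q_1}\cdots e^{\hat\xi_{j-1}q_{j-1}}}\hat\xi_j\;\dot q_j,
\]
which follows from the product rule applied to the exponential factorization and the fact that the trailing constant $\bar g^0_i$ cancels because it appears on the right of both $g^0_i$ and $(g^0_i)^{-1}$; this may be cited from \citep{murray1994mathematical,park1994computational}. Reading off the coefficients of the stacked input $(V_\orbit+V_0,\dot q_\m)$ then reproduces exactly the block structure of $^0 J^I_i$ in \eqref{baseJacobian}, with an identity block for the base-plus-orbit channel and the zero padding $\bO_{6\times(n-i)}$ for joints beyond $i$ which do not act on Body $i$.

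Third, I would obtain the quasi-inertial and body Jacobians in \eqref{noninertial:bodyJacobian} by a single application of the adjoint transport rule ${}^a V^c_b=\Ad_{g^a_d}{}^d V^c_b$: applying $\Ad_{g^I_\orbit g^\orbit_0}=\Ad_{g^I_0}$ to the ${}^0 V^I_i$ identity yields $^I J^I_i$; applying $\Ad_{g^\orbit_0}$ yields $^\orbit J^I_i$; and applying $\Ad_{(g^0_i)^{-1}}$ yields $^i J^I_i$. The main obstacle is purely frame-bookkeeping—keeping the three layers of relative motion (orbit, base, joint chain) and their expression frames straight through the adjoint conjugations—since once the POE spatial-Jacobian identity is in hand, every remaining manipulation is an algebraic conjugation of adjoint representations with no new analytical content.
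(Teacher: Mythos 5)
Your derivation is correct, and since the paper gives no proof of this lemma in the present document (it defers entirely to \citep{moghaddam2022}), your route---differentiating the kinematic chain $g^I_i=g^I_\orbit g^\orbit_0 g^0_i(q_\m)$, identifying the three body-velocity summands, invoking the POE spatial-Jacobian identity, and finishing with adjoint transports---is exactly the standard argument the cited reference supplies. The only wrinkle is notational: your first summand $\Ad_{(g^\orbit_0)^{-1}}{}^\orbit V^I_\orbit$ is the form consistent with the paper's stated convention ${}^jV^j_i=\Ad_{g^j_i}{}^iV^j_i$, whereas the paper's displayed definition $V_\orbit=\Ad_{g^\orbit_0}\iota_\orbit{}^\orbit\V^I_\orbit$ appears to carry an inverted adjoint subscript, so your reading is the defensible one.
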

\begin{proof}[Proof ] For rigorous proof, please refer to \citep{moghaddam2022}.
\end{proof}

\section{Space Manipulator Dynamics Relative to Orbital Frame} \label{sec4}
Built upon the moving base LPE formulation of spacecraft-manipulator dynamics in \citep{moghaddam2022}, we present the dynamic formulation of spacecraft-manipulator systems on the Cartesian product of $Q^I_0$ capturing the spacecraft motion and $Q_\m$ capturing the manipulator's configuration.  
The Equations of Motion (EoM) are decomposed into three components: (i) singularity-free locked spacecraft-manipulator dynamics via Euler-Poincaré equations, (ii) internal manipulator dynamics, and (iii) orbital dynamics through extended Kepler's equations in quasi-inertial frame. 
EoM of the spacecraft-manipulator system is found via the Lagrange-D'Alembert principle, based on the Lagrangian $\mathcal{L}\colon TQ \rightarrow \mathbb{R}$ defined as:
\begin{equation}
    \mathcal{L}=K-U,
    \label{noninertial:lagrangianfirst}
\end{equation}
and the applied forces to the system. Here, $K\colon TQ\rightarrow \mathbb{R}$ is the total kinetic energy, and $U\colon Q\rightarrow\mathbb{R}$ is the potential energy. The kinetic energy of the space manipulator is the sum of the kinetic energies of the bodies in the chain 
$K(g^I_0,\dot{g}^I_0,q_\mathfrak{m},\dot{q}_\mathfrak{m})=\frac{1}{2} \sum_{i=0}^n {{}^iV_i^I}^T(^i\mathcal{M}_i){}^iV_i^I$,
where the constant left-invariant metric ${}^i\mathcal{M}_i=\Ad^{-T}_{\bar{g}^i_{cm,i}}\begin{bmatrix}\mathbb{I}_3 m_i & \mathbb{O}_3\\\mathbb{O}_3 & \mathfrak{I}_i\end{bmatrix}\Ad^{-1}_{\bar{g}^i_{cm,i}} : \g_i\rightarrow \g_i^*$ is Body $i$'s inertia matrix in its preceding joint coordinate frame, with $m_i$ being the mass of Body $i$ and $\mathcal{I}_i$ representing the moments of inertia of Body $i$. Body velocities can be collected from \eqref{bodyvelocity} in vector form:
\begin{align}
    \begin{bmatrix}^0V_{0}^I \\ \vdots \\ {^nV_{n}^I}\end{bmatrix} &=
    \begin{bmatrix}{}^0J^I_0(q_\mathfrak{m})\\ \vdots\\ {}^nJ^I_n(q_\mathfrak{m})\end{bmatrix} \begin{bmatrix}V_\orbit+V_0\\\dot{q}_\mathfrak{m}\end{bmatrix}\\
    &=:\diag_0^n\{\Ad_{\bar{g}^0_i}^{-1}\}\mathcal{J}(q_\mathfrak{m})\begin{bmatrix}V_\orbit+V_0\\\dot{q}_\mathfrak{m}\end{bmatrix},\label{noninertial:totalJ}
\end{align}
where we have introduced the total Jacobian matrix $\mathcal{J}$, and $\diag_0^n\{\cdot\}$ is the block diagonal matrix of its arguments for $i\!=\!0,\ldots,n$.
It has been shown in \citep{moghaddam2022} that the Jacobian $\mathcal{J}$ in \eqref{noninertial:totalJ} can be decomposed into a constant matrix $\Xi$ and a $Q_\m$-dependent matrix $\mathfrak{L}$ as $\mathcal{J}(q_\mathfrak{m})=\mathfrak{L}(q_\mathfrak{m})\Xi$, where $\Xi=\begin{bmatrix}   \mathbb{I}_{6\times 6}& \mathbb{O}_{6\times n}\\\mathbb{O}_{6n\times 6}& \diag_1^n\{\xi_i\}\end{bmatrix}=:\begin{bmatrix} \mathbb{I}_{6\times 6}& \mathbb{O}_{6\times n}\\\mathbb{O}_{6n\times 6}& \Xi_\mathfrak{m}\end{bmatrix}$, with $\xi_i$ from \eqref{twist}, and $\mathfrak{L}=\begin{bmatrix}\mathbb{I}_{6\times6} &\mathbb{O}_{6\times 6n}\\\mathfrak{L}_{\m0}&\mathfrak{L}_\mathfrak{m}\end{bmatrix}$, where
\begin{equation}
    \mathfrak{L}_{\mathfrak{m}0}:=\begin{bmatrix} \Add_1^1\\ \Add^1_2 \\ \vdots \\ \Add^1_n \end{bmatrix}
    ~~~ \& ~~~ \mathfrak{L}_\mathfrak{m}:=\begin{bmatrix} \mathbb{I}_{6\times6}&\mathbb{O}_{6\times 6}&\cdots&\mathbb{O}_{6\times 6}\\ \Add^2_2 &\mathbb{I}_{6\times6}&\cdots&\mathbb{O}_{6\times 6}\\ \vdots &\vdots & \vdots &\vdots \\ \Add^2_n &\Add^3_n&\cdots&\mathbb{I}_{6\times6} \end{bmatrix} . \label{noninertial:Lcomponents}
\end{equation}
and
\begin{equation}
    \Add^j_i=\Ad_{e^{-\xi_{i}q_{i}}\cdots e^{-\xi_jq_j}}, \quad j>i \in \{1, \cdots , n\}. \label{noninertial:Lij}
\end{equation}
\begin{lemma}
For a symmetric potential energy under the action of $Q_0^I$,, the Lagrangian of a space-manipulator system on $TQ$ drops to a reduced Lagrangian $\ell^\orbit=\frac{1}{2}\begin{bmatrix}V_\orbit+V_0\\\dot{q}_\mathfrak{m}\end{bmatrix}^T M(q_\mathfrak{m}) \begin{bmatrix}V_\orbit+V_0\\\dot{q}_\mathfrak{m}\end{bmatrix}-u : \q_0^\vee \times TQ_\m\rightarrow \mathbb{R}$,
where $u:=U(\mathbb{I}_0,q_{\m})\colon Q_\m\rightarrow \mathbb{R}$ is the reduced potential energy, and $M(q_{\m}): (\q_0^\vee  \times T_{q_\m}Q_\m)\times (\q_0^\vee  \times T_{q_\m}Q_\m)\rightarrow \mathbb{R}$ is the reduced mass metric, given by:
\begin{align}
M(q_\mathfrak{m})=&\mathcal{J}^T(q_\m)\bigg(\diag^n_0\{\mathfrak{M}_i\}\bigg)\mathcal{J}(q_\m),
    \label{totalM}\\
   \mathfrak{M}_i =&\Ad_{\bar{g}^0_i}^{-T}({}^i\mathcal{M}_i)\Ad_{\bar{g}^0_i}^{-1},~~~i=0.\ldots,n
     \label{noninertial:frakM}
\end{align}
where $\mathfrak{M}_i$ is the mass matrix of Body $i$ seen from the spacecraft's coordinate frame in the system's initial configuration.  Here,  $\mathbb{I}_0 \in Q^I_0$ is the identity element of $Q^I_0$.
\end{lemma}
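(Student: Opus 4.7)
The plan is to prove the reduction in two steps: first show the kinetic energy descends via the left-trivialization already built into the body Jacobians of Lemma~2, and then invoke the assumed $Q_0^I$-invariance to reduce the potential energy.

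For the kinetic part, I would start from $K=\tfrac12\sum_{i=0}^n{{}^iV^I_i}^T({}^i\mathcal{M}_i){}^iV^I_i$, stack the body velocities into a single column, and substitute identity (\ref{noninertial:totalJ}). This expresses $K$ as a quadratic form in $\begin{bmatrix}V_\orbit+V_0\\\dot q_\m\end{bmatrix}$ whose kernel is $\mathcal{J}^T\diag_0^n\{\Ad_{\bar g^0_i}^{-T}\}\diag_0^n\{{}^i\mathcal{M}_i\}\diag_0^n\{\Ad_{\bar g^0_i}^{-1}\}\mathcal{J}$. I would then absorb the constant outer block-diagonals into the inertias using (\ref{noninertial:frakM}) to obtain $\diag_0^n\{\mathfrak{M}_i\}$, so the kernel collapses to $M(q_\m)=\mathcal{J}^T\diag_0^n\{\mathfrak{M}_i\}\mathcal{J}$ as in (\ref{totalM}).

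For the potential, I would apply the $Q_0^I$-symmetry hypothesis directly: $U(hg^I_0,q_\m)=U(g^I_0,q_\m)$ for every $h\in Q_0^I$. Setting $h=(g^I_0)^{-1}$ gives $U(g^I_0,q_\m)=U(\mathbb{I}_0,q_\m)=:u(q_\m)$, so $U$ descends to a function of $q_\m$ alone. Combining both reductions yields the stated $\ell^\orbit$ on $\q_0^\vee\times TQ_\m$.

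The main point to verify carefully is that the body Jacobians ${}^iJ^I_i$, and therefore $\mathcal{J}(q_\m)$, depend only on $q_\m$ and not on $g^I_0$ -- which follows from (\ref{noninertial:bodyJacobian}) because $g^0_i=g^0_i(q_\m)$ -- and that the top six rows of (\ref{noninertial:totalJ}) actually give ${}^0V^I_0=V_\orbit+V_0$. The latter is immediate from the leading $\mathbb{I}_6$ block of (\ref{baseJacobian}) combined with the velocity composition ${}^0V^I_0=\Ad_{g^\orbit_0}\iota_\orbit{}^\orbit V^I_\orbit+{}^0V^\orbit_0=V_\orbit+V_0$. Once these identifications are secured, the remainder is routine bilinear algebra and a direct application of the symmetry hypothesis, and no delicate geometric step remains.
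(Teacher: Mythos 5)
Your proposal is correct and follows the same route as the paper, which simply states that the result is a straightforward computation of $K$ using \eqref{noninertial:totalJ} together with invariance of $U$; you have merely filled in the bilinear-algebra details (absorbing the constant $\Ad_{\bar g^0_i}^{-1}$ blocks into the inertias via \eqref{noninertial:frakM}) and the two sanity checks on $\mathcal{J}$. No discrepancy with the paper's argument.
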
 
\begin{proof}[Proof] 
The proof is straightforward computation of $K$ using \eqref{noninertial:totalJ}, noting it's independence of the orbit and spacecraft poses. 
The potential $U$ can also be dropped to $Q_\m$ if it is invariant of the orbit and spacecraft poses.
\end{proof}


It can be observed from  \ref{noninertial:Lcomponents}, \eqref{noninertial:totalJ}, and \eqref{noninertial:frakM} that $\ell^\orbit$ is $g^I_0$-independent:
\begin{multline}
        \ell^\orbit(V_\orbit+V _0,q_m,\dot{q}_m)=\\
        \frac{1}{2}\begin{bmatrix}V_\orbit+V_0\\\dot{q}_\mathfrak{m}\end{bmatrix}^T \begin{bmatrix} M_0 & M_{0\mathfrak{m}} \\ M_{0\mathfrak{m}}^T & M_\mathfrak{m} \end{bmatrix} \begin{bmatrix}V_\orbit+V_0\\\dot{q}_\mathfrak{m}\end{bmatrix}-u,
    \label{noninertial:lo}
\end{multline}
where: 
\begin{alignat}{3}
&M_0&&=\mathfrak{M}_0+\mathfrak{L}_{\mathfrak{m}0}^T (\diag^n_1\{\mathfrak{M}_i\})\mathfrak{L}_{\mathfrak{m}0},\label{noninertial:mass0inbase}\\
  &M_{0\m}&&={\mathfrak{L}_{\mathfrak{m}0}}^T (\diag^n_1\{\mathfrak{M}_i\}) \mathfrak{L}_\mathfrak{m} \Xi_\mathfrak{m},\label{noninertial:mass0minbase}\\
  &M_{\mathfrak{m}}&&=\Xi_\mathfrak{m}^T\mathfrak{L}_{\m} ^T (\diag^n_1\{\mathfrak{M}_i\}) \mathfrak{L}_\mathfrak{m}\Xi_\mathfrak{m}.
  \label{noninertial:massminbase}
\end{alignat}

\begin{definition} \label{principalconnection}

We define the \textit{spatial mechanical connection}  $\mathcal{A}_I :Q_0\times \q^0_0 \times TQ_\m \rightarrow \q^I_I$ on the principal bundle $Q_0 \times Q_\m \rightarrow Q_\m$ based on the left translation action of $Q_0$ to be the map that assigns to each $(V_\orbit+V_0,\dot{q}_\m)$ the corresponding total velocity of the locked system relative to the orbit expressed in the inertial coordinate frame $\A_I(g^I_0,V_\orbit+V_0,q_\m,\dot{q}_\m)
:=\Ad_{g^\orbit_0}(V_\orbit+V_0+\A(q_\m)\dot{q}_\m)$, with the fiber-wise linear mapping $\A: TQ_\m \rightarrow \q^\vee_0$ defined as 
\begin{equation}
  \A(q_\m):=({M}_0)^{-1}(q_\m){M}_{0\m}(q_\m).
  \label{noninertial:specialconnectioncm}
\end{equation}
We define the \textit{body mechanical connection} $\A_0:Q^I_0\times\q^\vee_0\times TQ_\m \rightarrow \q^\vee_0$ on the principal bundle $Q^I_0 \times Q_\m \rightarrow Q_\m$ based on the right action of $Q^I_0$ as the map that assigns to each $(V_\orbit+V_0,\dot{q}_\m)$ the corresponding total velocity of the locked system relative to the orbital frame expressed in the spacecraft's coordinate frame $\A_0(V_\orbit+V_0,q_\m,\dot{q}_\m):=V_\orbit+V_0+\A(q_\m)\dot{q}_\m$.
Note that $\A$ is independent of the elements of $H$ and $Q_0$. \qed %
\end{definition} 

Let
\begin{equation}
    \OOs={}^0\Omega^I_{loc}:=V_\orbit+V_0+\A\dot{q}_\m
    \label{noninertial:lockedorbitalvelocitywrtorbit}
\end{equation} 
be the absolute angular velocity of the locked  system (the equivalent instantaneously locked rigid system) about its CoM  expressed in the spacecraft's coordinate frame.
Both these mechanical connections are defined on the tangent bundle of the relative-to-orbit configuration manifold $TQ_0 \times TQ_\m$.

\begin{lemma}
By the change of variables $(V_\orbit+V_0,\dot q_\m) \mapsto (\OOs ,\dot q_\m)$, the Lagrangian of a space-manipulator system $\ell^\orbit$ in \eqref{noninertial:lo} transforms into 
\begin{equation}
    \ell^{cm}=\frac{1}{2}\begin{bmatrix}\OOs\\\dot{q}_\m\end{bmatrix}^T \begin{bmatrix}  M_0 & \mathbb{O}_{6 \times n} \\ \mathbb{O}_{n \times 6} & \hat{M}_\mathfrak{m} \end{bmatrix} \begin{bmatrix} \OOs \\\dot{q}_\m\end{bmatrix}-u,
    \label{noninertial:lcm}
\end{equation}
with a block-diagonal mass matrix for expressing the kinetic energy of the system. Here, we have defined
\begin{equation}
    \hat{M}_\m={M}_\m-\A^T{M}_\orbit \A.\label{noninertial:massmatrix}
\end{equation}
\end{lemma}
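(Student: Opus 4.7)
The plan is to prove the lemma by direct substitution of the affine change of variables
$V_\orbit + V_0 = \OOs - \A(q_\m)\,\dot q_\m$
into the quadratic form \eqref{noninertial:lo} for $\ell^\orbit$ and then to verify, by invoking the definition $\A = M_0^{-1} M_{0\m}$ from \eqref{noninertial:specialconnectioncm}, that the off-diagonal blocks of the transformed mass matrix vanish identically. Since $u$ depends only on $q_\m$ and since the change of variables is purely a linear shift in the fiber coordinate at each $q_\m$, the potential term is carried through unchanged, and the work reduces to a purely algebraic manipulation of the kinetic block.

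First I would expand the kinetic energy as four scalar contributions: the $(\OOs,\OOs)$ block, the two mixed $(\OOs,\dot q_\m)$ cross blocks, and the pure $(\dot q_\m,\dot q_\m)$ block. The $(\OOs,\OOs)$ block trivially yields $\tfrac12\,\OOs^T M_0 \OOs$ since the upper-left entry of the original mass matrix is $M_0$. For the mixed blocks I would collect coefficients of $\OOs^T(\cdot)\dot q_\m$, obtaining the matrix $M_{0\m} - M_0 \A$; substituting $\A = M_0^{-1} M_{0\m}$ gives $M_{0\m} - M_0 M_0^{-1} M_{0\m} = 0$. By symmetry of $M_0$ (so that $\A^T M_0 = M_{0\m}^T$), the transposed cross block likewise vanishes. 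This is precisely the geometric content that $\A$ is the orthogonal connection with respect to the kinetic metric, so the vanishing of the cross terms is a conceptual consequence rather than a coincidence.

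For the pure $(\dot q_\m,\dot q_\m)$ block I would collect the four surviving quadratic contributions $\A^T M_0 \A - \A^T M_{0\m} - M_{0\m}^T \A + M_\m$ and simplify using $\A^T M_0 \A = \A^T M_{0\m} = M_{0\m}^T \A$, which follows again from $M_0\A = M_{0\m}$. Three of the four terms coincide in magnitude and their signed sum collapses to $M_\m - \A^T M_0 \A$, which is the definition of $\hat M_\m$ in \eqref{noninertial:massmatrix}. Combining these three blocks gives exactly the block-diagonal expression \eqref{noninertial:lcm}, completing the proof.

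No real obstacle is anticipated: the argument is a completion of squares in block-matrix form, and the block-diagonalization is a direct corollary of the definition of $\A$ as $M_0^{-1} M_{0\m}$. The only point requiring mild care is bookkeeping the symmetry $\A^T M_0 = M_{0\m}^T$ so that the mixed terms cancel exactly rather than merely up to a transpose; once this identity is invoked, every remaining step is a routine substitution. I would also note briefly that this lemma makes manifest the Horizontal/vertical Kaluza–Klein splitting of the reduced Lagrangian associated with the mechanical connection, which is why the momentum $M_0 \OOs$ decouples cleanly from the shape kinetic energy $\tfrac12 \dot q_\m^T \hat M_\m \dot q_\m$.
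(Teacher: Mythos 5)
Your proposal is correct and follows exactly the computation the paper leaves as ``straightforward'': substitute $V_\orbit+V_0=\OOs-\A\dot q_\m$, use $M_0\A=M_{0\m}$ to kill the cross blocks, and collapse the shape block to $M_\m-\A^TM_0\A$. Note that your derived result $\hat M_\m=M_\m-\A^TM_0\A$ is the intended one; the symbol $M_\orbit$ appearing in \eqref{noninertial:massmatrix} is evidently a typo for $M_0$, since $M_\orbit$ is not a block of the reduced mass metric in \eqref{noninertial:lo} and the subsequent momentum definition $P=M_0\OOs$ is consistent only with your expression.
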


\begin{proof}
The proof is a straightforward computation.
\end{proof}
Based on $l^{cm}$ in \eqref{noninertial:lcm} and $\OOs$ in \eqref{noninertial:lockedorbitalvelocitywrtorbit}, we can now introduce the generalized momentum of the locked system relative to the orbit about the space manipulator's CoM  $P \in {\q^0_0}^*$ represented in the spacecraft's coordinate frame:
\begin{align}
    P=\frac{\partial l^{cm}}{\partial \OOs}&={M}_0 \OOs \nonumber\\
    &=\underbrace{M_0 V_\orbit}_{P_\orbit} +\underbrace{M_0V_0 + M_{0} \A\dot{q}_\mathfrak{m}}_{P_0} \in (\q^\vee_0)^*\label{noninertial:momentum}.
\end{align}
 \begin{remark}
The base–arm contribution to the total momentum of the system observed from the spacecraft frame is labeled
$P_0$, and the orbital contributaions are grouped in $P_{\orbit}$.
Our formulation (LPKE equations in \eqref{noninertial:EoM}–\eqref{noninertial:vcurlyevolution}) will admit both the standard zero-momentum and nonzero-momentum initializations without modification. The commonly used “zero total momentum” setup for free-floating systems (with no external wrench and no symmetry-breaking orbital terms)
is recovered by initializing $P_\orbit=0$, and forcing the total spatial momentum in inertial frame $\mu$ to be conserved. In this case $P_0$ is a \emph{rotated observation} of the constant $\mu$ in the moving spacecraft frame. Non-zero initial
momenta are handled by initializing $(P_0,P_\orbit)$ accordingly: analytic Keplerian terms enter as symmetry-breaking inputs, $P_{\orbit}\neq 0$ and the base twist acquires a bias. Consequently $P_0$ evolves in time, reflecting orbit–attitude coupling and non-inertial effects. 
\end{remark}

By introducing the following assumption, we can use Kepler's equations \citep{curtis2013orbital} with a conserved orbital angular momentum $\mu_\orbit$ to find the evolution of $P_\orbit$:
\begin{Assumption}\label{noninertial:assumption}
The locked system's orbital motion is under no external forces, resulting in conservation of the locked orbital angular momentum of the system $\mu_\orbit$ relative to the quasi-inertial frame, where ${}^I\breve{P}^I_\orbit=[ \mu_{\orbit_x} ~~ \mu_{\orbit_y} ~~\mu_\orbit ]^T$.
\end{Assumption} 
\begin{lemma}\label{noninertial:lemmaPorbit}
For the space manipulator moving relative to an orbital frame with constant angular momentum $\mu_\orbit$, orbital frame pose $g^I_\orbit$ and velocity ${}^0\V_\orbit^I$ from Lemma \ref{orbitkinlemma}, the conserved the locked orbital angular momentum seen in spacecraft frame $P_\orbit$ evolves according to:
\begin{align}
    &P_\orbit(\theta,q_\m)=M_0  \frac{\mathcal{G}m_\oplus }{\mu_\orbit}\Ad_{g^\orbit_0}\iota_\orbit\begin{bmatrix}  -\sin(\theta-\bar{\theta})\\ 1-\cos(\theta-\bar{\theta})\\\frac{\mathcal{G}m_\oplus} {\mu_\orbit^2}(1+e_\orbit \cos(\theta))^2\end{bmatrix} \nonumber\\
    &\dot{P}_\orbit(\theta,q_\m)=\frac{\mathcal{G}m_\oplus}{\mu_\orbit}\Bigg(\left(\sum^n_{i=1}\frac{\partial M_0}{\partial q_i}\dot{q}_i\right)\Ad_{g^\orbit_0}\nonumber\\
    &~~~~~~~~~+M_0\left(\ad_{V_0}\Ad_{g^\orbit_0}\right)\Bigg)\iota_\orbit \begin{bmatrix}  -\sin(\theta-\bar{\theta})\\ 1-\cos(\theta-\bar{\theta})\\\frac{\mathcal{G}m_\oplus} {\mu_\orbit^2}(1+e_\orbit \cos(\theta))^2\end{bmatrix}\nonumber\\
    &~~~~~~~~~+\frac{\mathcal{G}m_\orbit^3}{\mu_\orbit^4}M_0\Ad_{g^\orbit_0}\iota_\orbit \left(1+e \cos (\theta)\right)^2\nonumber\\
    &~~~~~~~~~~~~~~~~~~~~~~~~~~~~~~~~\begin{bmatrix}
    -\cos(\theta-\bar{\theta})\\\sin(\theta-\bar{\theta})\\ -\frac{2 \mathcal{G}m_\oplus
    }{\mu_\orbit^2}e_\orbit sin(\theta)(1+e_\orbit cos(\theta))
    \end{bmatrix}.\label{noninertial:Porbit in time}
\end{align}
\end{lemma}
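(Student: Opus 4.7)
The plan is to treat the two claims separately. The formula for $P_\orbit$ is an assembly step: by the decomposition introduced in \eqref{noninertial:momentum}, $P_\orbit = M_0 V_\orbit$, and from the definition of $V_\orbit$ together with \eqref{orbitvel} in Lemma~\ref{orbitkinlemma} we have $V_\orbit = \Ad_{g^\orbit_0}\iota_\orbit\,{}^\orbit\V^I_\orbit(\theta)$. First I would simply substitute the closed-form expression for ${}^\orbit\V^I_\orbit(\theta)$ into this identity; no hidden work is required, only carefully carrying the $\frac{\mathcal{G}m_\oplus}{\mu_\orbit}$ scalar out front and verifying that the last component of ${}^\orbit\V^I_\orbit$ lands correctly through $\iota_\orbit$.

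For the time derivative, the plan is to apply the product rule to the three time-dependent factors $M_0(q_\m)$, $\Ad_{g^\orbit_0}$, and $\iota_\orbit\,{}^\orbit\V^I_\orbit(\theta)$, and then to identify each of the three resulting terms with the corresponding line of \eqref{noninertial:Porbit in time}. Concretely: (i) since $M_0$ depends only on the shape variables, $\dot M_0=\sum_{i=1}^n \tfrac{\partial M_0}{\partial q_i}\dot q_i$, which produces the first summand. (ii) Using the body-twist convention $(g^\orbit_0)^{-1}\dot g^\orbit_0=\hat V_0$ established in Section~\ref{foundations}, the standard Lie-group identity for the derivative of the Adjoint map yields $\tfrac{d}{dt}\Ad_{g^\orbit_0}=\ad_{V_0}\Ad_{g^\orbit_0}$ (equivalently $\Ad_{g^\orbit_0}\ad_{V_0}$ after conjugation), which produces the second summand. (iii) Finally $\iota_\orbit$ is constant, so $\tfrac{d}{dt}\,{}^\orbit\V^I_\orbit=\dot\theta\,\tfrac{d}{d\theta}\,{}^\orbit\V^I_\orbit$; differentiating the closed form in \eqref{orbitvel} component-wise in $\theta$ yields the bracket of cosines, sines, and the $-2 e_\orbit\sin\theta(1+e_\orbit\cos\theta)$ factor appearing in the third summand.

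The key auxiliary input here, and where Assumption~\ref{noninertial:assumption} is used, is the conservation of $\mu_\orbit$, which lets me invoke Kepler's second law in the form $\dot\theta\,r^2=\mu_\orbit$ together with the conic relation $r=\mu_\orbit^2/(\mathcal{G}m_\oplus(1+e_\orbit\cos\theta))$ derived in Section~\ref{sec:orbitmotion}. This gives the closed-form
\begin{equation*}
\dot\theta=\frac{(\mathcal{G}m_\oplus)^2}{\mu_\orbit^3}(1+e_\orbit\cos\theta)^2,
\end{equation*}
and multiplying it into $\tfrac{d}{d\theta}\,{}^\orbit\V^I_\orbit$ produces the exact prefactor $\mathcal{G}m_\oplus^3/\mu_\orbit^4$ and the $(1+e_\orbit\cos\theta)^2$ multiplier in the third line of \eqref{noninertial:Porbit in time}. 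Collecting the three contributions and factoring $\Ad_{g^\orbit_0}\iota_\orbit$ through where it commutes with the scalar derivatives gives the stated expression.

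The only technically delicate step I expect is bookkeeping around the Adjoint derivative in (ii): one must be consistent about whether $V_0$ is being read as a body or spatial twist when composing $\ad_{V_0}\Ad_{g^\orbit_0}$ versus $\Ad_{g^\orbit_0}\ad_{V_0}$, since the two agree only up to conjugation $\ad_{\Ad_g X}=\Ad_g\ad_X\Ad_g^{-1}$. Once that convention is pinned down to match the one used throughout Section~\ref{foundations}, the rest reduces to direct differentiation of the analytic Keplerian expressions from Lemma~\ref{orbitkinlemma}.
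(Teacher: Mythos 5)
Your proposal is correct and follows essentially the same route as the paper's Appendix C proof: substitute $V_\orbit=\Ad_{g^\orbit_0}\iota_\orbit{}^\orbit\V^I_\orbit(\theta)$ into $P_\orbit=M_0V_\orbit$, then apply the product rule using $\dot M_0=\sum_i\frac{\partial M_0}{\partial q_i}\dot q_i$, $\frac{d}{dt}\Ad_{g^\orbit_0}=\ad_{V_0}\Ad_{g^\orbit_0}$, and $\dot\theta=\frac{(\mathcal{G}m_\oplus)^2}{\mu_\orbit^3}(1+e_\orbit\cos\theta)^2$ from Kepler's second law. Your caution about the Adjoint-derivative convention is well placed (and in fact your $\theta$-derivative of the second velocity component, $\sin(\theta-\bar\theta)$, matches the lemma statement rather than the stray $1+\sin(\theta-\bar\theta)$ appearing in the appendix), but no step is missing.
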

\begin{proof}
    Proof is detailed in Appendix \ref{app:lemmaPorbit}. \qedsymbol
\end{proof}
\begin{remark}
    Since \( P_\odot \) and \( \dot{P}_\odot \) are computed analytically at each time-step, rather than being obtained through full dynamic integration over time, their associated numerical instability and computational complexities are effectively mitigated.
\end{remark}
Let us define the variation 
$\hat{\eta}_0:=({g^\orbit_0})^{-1}\delta g^\orbit_0 \in \g^0_0 $
as the left translation of the variation of the curve $g^\orbit_0(t)\in Q_0$ to the lie algebra $\g^0_0$, with the conditions $\eta_0(t_0,\epsilon)=\eta_0(t_f,\epsilon)=0$ (for rigorous definition of the variation operator $\delta$ refer to \citep{moghaddam2022}).
We have presented the closed form solution for the evolution of the orbital pose and velocity in Lemma \ref{orbitdyn} and \eqref{orbitvel} based on the well-known Kepler's solution\citep{curtis2013orbital}. Therefore, since the exact evolution of $g^I_\orbit$ is known, the variation $\hat{\eta}_\orbit=(g^I_\orbit)^{-1}\delta(g^I_\orbit)=0$ is null for all the calculations.

A generalized force exerted at the coordinate frame attached to Body $j$ consists of a linear force $f_j \in \mathbb{R}^3$ and an angular moment $\tau_j \in \mathbb{R}^3$, where  $\begin{bmatrix} f_j^T & \tau_j^T
\end{bmatrix}^T\in(\g_j^\vee)^*$ is called a \textit{body wrench}. The power generated by a wrench can be calculated by the pairing with the body twist of the frame where the wrench is applied as $\left<\begin{bmatrix} f_j^T & \tau_j^T
\end{bmatrix}^T,{}^jV_j^i\right>$.

\begin{lemma}\label{noninertial:varlemma}
A space-manipulator system with the Lagrangian in \eqref{noninertial:lagrangianfirst} and the applied force $F=(F_0,F_m)\in T^*Q\cong  \times T^*Q_0\times T^*Q_\m$ satisfies the Hamilton-d'Alambert principle:
\begin{equation}
    \delta \int^{b}_{a} \L(q,\dot{q})dt-\int_{a}^{b} \left<F,\delta q\right>dt=0, 
    \label{noninertial:hamiltonforL}
\end{equation}
for variations of the type $\delta q$, if and only if $\ell^{cm}$ in \eqref{noninertial:lcm} satisfies the Hamilton-d'Alambert principle \citep{blochbook,moghaddam2022}:
\begin{equation}
    \delta \int\big(l^{cm}(q_\mathfrak{m},\dot{q}_\mathfrak{m},\OOs)- \big<F_{\eta_0},\eta_0\big> +\left<F_{\m},\delta q_\m\right>
    \big) dt=0
    \label{noninertial:hamiltonforlcm}
\end{equation}
for variations of the type:
\begin{equation}
    \delta (\OOs)= \dot{\eta_0}+ [{}^0V^\orbit_0,\eta_0] + \A\delta \dot{q}_\mathfrak{m}+\delta \A\dot{q}_\mathfrak{m},
    \label{noninertial:varomega0}
\end{equation}
and
\begin{equation}
    F_{\eta_0}=T_{\mathbb{I}}^*L_{g_0^\orbit}F_{0}\in(\{\g_0^0\}^\vee)^*.
\end{equation}
The mapping $T_{\mathbb{I}}^*L_{g_0^\orbit}\colon T^*_{g_0^0}Q_0\rightarrow (\{\q_0^0\}^\vee)^* $ is the dual of the tangent map corresponding to the left translation at the identity $\mathbb{I}\in Q_0$. 
\end{lemma}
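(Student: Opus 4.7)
The plan is to establish the equivalence by executing the standard Lagrange--Poincaré reduction on the principal bundle $Q_0^I \times Q_\m \to Q_\m$ (in the spirit of \citep{blochbook,moghaddam2022}), with the orbital kinematics $g^I_\orbit(t)$ treated as a prescribed time-dependent input by virtue of Lemmas~\ref{orbitdyn}--\ref{orbitkinlemma}. Since the reduced Lagrangian $\ell^{cm}$ in \eqref{noninertial:lcm} is independent of $g_0^\orbit$, the content of the lemma is that Hamilton--d'Alembert for $\L$ descends to Hamilton--d'Alembert for $\ell^{cm}$ under the change of coordinates $(V_0,\dot q_\m)\mapsto(\OOs,\dot q_\m)$, provided the induced variation of $\OOs$ and the reduced force pairing are used.

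First I would fix the left-trivialized variation $\hat\eta_0 = (g_0^\orbit)^{-1}\delta g_0^\orbit \in \q_0^\vee$ with vanishing endpoints, so that variations $\delta q$ are in bijective correspondence with pairs $(\hat\eta_0,\delta q_\m)$. The variation of the body velocity follows from the classical interchange-of-derivatives identity on a Lie group: $\delta \hat V_0 = \dot{\hat\eta}_0 + [\hat V_0,\hat\eta_0]$. The chain rule then gives $\delta(\A\dot q_\m) = (\delta\A)\dot q_\m + \A\,\delta\dot q_\m$. Since Lemmas~\ref{orbitdyn}--\ref{orbitkinlemma} supply $g^I_\orbit(t)$ and ${}^\orbit \V_\orbit^I(t)$ in closed form as explicit functions of time only, these are held fixed under variation and $V_\orbit$ contributes only as a time-dependent bias to $\OOs$ without a corresponding intrinsic bracket term. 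Summing the three pieces yields the stated formula for $\delta\OOs$.

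Next, the force pairing is reduced by left-translation on $T^*Q_0$. Writing $\delta g_0^\orbit = T_\mathbb{I}L_{g_0^\orbit}\hat\eta_0$ and taking the dual gives $\langle F_0,\delta g_0^\orbit\rangle = \langle T_\mathbb{I}^*L_{g_0^\orbit}F_0,\hat\eta_0\rangle = \langle F_{\eta_0},\eta_0\rangle$, which defines $F_{\eta_0}$; the shape-space force $F_\m$ transfers unchanged because it already lives on $T^*Q_\m$. Substituting the reduced form of $\L$ together with these force pairings into \eqref{noninertial:hamiltonforL} and collecting terms recovers the integrand of the reduced principle \eqref{noninertial:hamiltonforlcm}. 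The converse implication follows because every reduced variation $(\hat\eta_0,\delta q_\m)$ lifts uniquely, given an initial $g_0^\orbit$, to a variation of $q$, so the two variational conditions are indeed equivalent rather than merely one-directional.

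The main obstacle is to justify that the Adjoint-induced dependence of $V_\orbit = \Ad_{g_0^\orbit}\iota_\orbit {}^\orbit \V_\orbit^I$ on $g_0^\orbit$ does not add a bracket term to $\delta\OOs$, despite the apparent algebraic coupling through the Adjoint. The resolution lies in the decoupling afforded by Kepler's equations: because $g^I_\orbit(t)$ is determined independently of the spacecraft-manipulator variational problem, the orbital channel enters the reduced dynamics through the analytically known momentum $P_\orbit$ of Lemma~\ref{noninertial:lemmaPorbit} as a symmetry-breaking forcing term, rather than as an intrinsic bracket contribution to the variation. This is precisely the novel structural ingredient of the LPKE relative to the ordinary LPE reduction in \citep{blochbook,moghaddam2022}, and it is what allows the reduced variational formula to take the clean Euler--Poincaré form stated in the lemma.
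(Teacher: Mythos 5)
Your proposal follows essentially the same route as the paper's Appendix~\ref{app:variations}: left-trivialize the variation as $\hat\eta_0=(g_0^\orbit)^{-1}\delta g_0^\orbit$ with vanishing endpoints, invoke the standard Lie-group identity $\delta \hat V_0=(\dot\eta_0+\ad_{V_0}\eta_0)^\wedge$, apply the chain rule to $\A\dot q_\m$, kill the orbital contribution by noting that Lemmas~\ref{orbitdyn}--\ref{orbitkinlemma} prescribe $g^I_\orbit(t)$ and ${}^\orbit\V^I_\orbit(t)$ in closed form so that $\delta\V_\orbit=0$ and $\hat\eta_\orbit=0$, and reduce the force pairing via $\left<F_0,\delta g_0^\orbit\right>=\left<T^*_{\mathbb I}L_{g_0^\orbit}F_0,\eta_0\right>$. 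Up to your added remarks, this is the paper's proof.

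The one place where you go beyond the paper is also the one place where your argument is asserted rather than derived. You correctly identify that $V_\orbit=\Ad_{g_0^\orbit}\iota_\orbit\,{}^\orbit\V^I_\orbit$ depends on $g_0^\orbit$ through the Adjoint, but the resolution you offer does not address it: Kepler's equations fix $g^I_\orbit(t)$ and ${}^\orbit\V^I_\orbit(t)$, which justifies $\delta({}^\orbit\V^I_\orbit)=0$, whereas the Adjoint factor depends on $g_0^\orbit$ --- precisely the variable being varied --- so $\delta(\Ad_{g_0^\orbit})=\Ad_{g_0^\orbit}\ad_{\eta_0}$ does not vanish and a priori contributes a term of the form $\pm[V_\orbit,\eta_0]$ to $\delta(\OOs)$, which would turn the bracket in \eqref{noninertial:varomega0} into $[V_0+V_\orbit,\eta_0]$ and add a corresponding $\ad^T_{V_\orbit}$ term downstream. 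The paper's own proof sidesteps this by writing the chain rule directly on $\iota_\orbit\V_\orbit+V_0+\A\dot q_\m$, i.e.\ without the Adjoint factor, so it does not settle the point either; but ``the orbital channel enters as a symmetry-breaking forcing term'' is a description of the desired conclusion, not an argument for it. To close this, you should either justify from the frame conventions that $V_\orbit$ is to be held fixed under $\delta g_0^\orbit$ (an exogenous input already expressed in the co-moving frame), or carry the $[V_\orbit,\eta_0]$ term through and show explicitly where it is absorbed.
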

\begin{proof} 
The proof is provided in  Appendix \ref{app:variations}  along with the detailed derivation of the forcing term $f_\eta$ in \citep{moghaddam2022}.
\end{proof}
\begin{lemma}
The total body wrench collocated with the vehicle velocity ($F_\eta$) and the total force collocated with the joint velocities are found from: 
\begin{equation}
    F_\eta=f_{0}+J_{\E,0}^Tf_\E \in (\h^\vee)^*, \quad \& \quad 
    F_\m=f_\m+J_{\E,\m}^Tf_\E \in T^*Q_\m,\label{f.m}
\end{equation}
where the end-effector Jacobians are:
\begin{align}
    J_{\E,0}(q_\m)&= \Ad_{g^0_n}^{-1},\\ J_{\E,\m}(q_\m)&= \Ad_{g^0_n}^{-1}\begin{bmatrix}  {\xi}_{1}\cdots \Ad_{e^{\hat{\xi}_{1}q_{1}}\cdots e^{\hat{\xi}_{n-1}q_{n-1}}}{\xi}_n \end{bmatrix},
    \label{eeJacobian}
\end{align}
with vehicle wrenches collocated with the base vehicle's body velocity $f_0 \in ((\g^0_0)^\vee)^*$, 
 arm forces collocated with the joint velocities $f_\m \in T^*Q_\m$, 
and the external wrench acting at the end-effector $f_\E \in (\g_\E^\vee)^*$, 
\end{lemma}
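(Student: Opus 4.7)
The plan is to obtain both identities from a single application of the virtual-work (power-balance) principle: sum the virtual powers of the three applied wrenches ($f_0$ on the base, $f_\m$ on the joints, and $f_\E$ at the end-effector), rewrite the end-effector contribution in terms of the independent variations $(\eta_0,\delta q_\m)$, and read off the coefficients.

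First I would recall that in the variational setup of Lemma~\ref{noninertial:varlemma}, the base variation is $\hat\eta_0=(g^\orbit_0)^{-1}\delta g^\orbit_0$, the joint variation is $\delta q_\m$, and (since Kepler evolution is deterministic) $\delta g^I_\orbit=0$. Using $g^I_n=g^I_\orbit g^\orbit_0 g^0_n(q_\m)$ with the POE factorization $g^0_n(q_\m)=e^{\hat\xi_1 q_1}\cdots e^{\hat\xi_n q_n}\bar g^0_n$, I would compute the virtual body twist of the end-effector,
\begin{equation*}
\hat\eta_n^{\E}:=(g^I_n)^{-1}\delta g^I_n=\Ad_{(g^0_n)^{-1}}\hat\eta_0+(g^0_n)^{-1}\delta g^0_n,
\end{equation*}
where the second term is the standard POE variation and yields
\begin{equation*}
(g^0_n)^{-1}\delta g^0_n=\Ad_{(g^0_n)^{-1}}\sum_{i=1}^n\Ad_{e^{\hat\xi_1 q_1}\cdots e^{\hat\xi_{i-1}q_{i-1}}}\hat\xi_i\,\delta q_i.
\end{equation*}
Identifying coefficients of $\eta_0$ and $\delta q_\m$, this is precisely the factorization $\eta_n^{\E}=J_{\E,0}\,\eta_0+J_{\E,\m}\,\delta q_\m$ with $J_{\E,0}$ and $J_{\E,\m}$ as stated in the lemma.

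Next, I would sum the virtual powers. Since $f_0$ and $f_\m$ are by definition collocated with $\eta_0$ and $\delta q_\m$, and $f_\E$ is collocated with the end-effector body twist,
\begin{equation*}
\delta W=\langle f_0,\eta_0\rangle+\langle f_\m,\delta q_\m\rangle+\langle f_\E,J_{\E,0}\eta_0+J_{\E,\m}\delta q_\m\rangle.
\end{equation*}
Using the definition of the dual/transpose pairing on $(\g_\E^\vee)^*$ this regroups as
\begin{equation*}
\delta W=\bigl\langle f_0+J_{\E,0}^T f_\E,\,\eta_0\bigr\rangle+\bigl\langle f_\m+J_{\E,\m}^T f_\E,\,\delta q_\m\bigr\rangle,
\end{equation*}
and since $\eta_0$ and $\delta q_\m$ are independent variations on the principal bundle, matching this with the generalized-force side of the Hamilton–d'Alembert principle in Lemma~\ref{noninertial:varlemma} yields the claimed expressions for $F_\eta$ and $F_\m$.

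The main obstacle, though mild, is keeping the frames straight: one must verify that the $f_\E$-pairing uses the end-effector \emph{body} twist (so that the Adjoint transforms appear on the correct side), and that the POE variation produces twists expressed in the spacecraft frame before being pulled back to Body~$n$ by $\Ad_{(g^0_n)^{-1}}$, exactly reproducing the expressions \eqref{eeJacobian}. Once that bookkeeping is settled, the result follows by inspection.
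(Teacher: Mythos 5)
Your proposal is correct: the computation of the end-effector body-twist variation via the POE factorization (with $\delta g^I_\orbit=0$) reproduces exactly the Jacobians in \eqref{eeJacobian}, and regrouping the virtual power of $f_\E$ by transposition onto the independent variations $\eta_0$ and $\delta q_\m$ yields \eqref{f.m}. The paper itself defers this proof to Appendix D of the cited prior work rather than reproducing it, but your virtual-work argument is the standard derivation consistent with the Lagrange--d'Alembert setup of Lemma~\ref{noninertial:varlemma}, so no further comparison is needed.
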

\begin{proof}
    Proof is provided in Appendix D of \citep{moghaddam2022}.
\end{proof}

\begin{theorem} [\textbf{Lagrange-Poincar\'{e}-Kepler Equations for Spacecraft-Manipulator Systems in Orbit}]\label{noninertialtheorem:EoM} Given a space-manipulator system moving relative to an undisturbed  orbit defined by its constant angular momentum ${\mu}_\orbit$, eccentricity $e_\orbit$, gravitational constant $\mathcal{G}m_\oplus$, and initial orbital true anomaly $\bar{\theta}$ with the orbital configuration $g^I_\orbit$ and base configuration $g^\orbit_0$ such that $g^I_\orbit g^\orbit_0\in Q_0^I$, a set of joint angles $q_\mathfrak{m}\in Q_\m$, input wrenches $f_{0}\in (\q_0^0)^*$, 
$f_\mathfrak{m}\in T^*Q_\m$, and $f_\E\in (\q^\E_\E)^*$ that are collocated with the spacecraft's body velocity, the joint velocities, and the end-effector's body velocity, respectively, and the  $\L$agrangian $\L$ in \eqref{noninertial:lagrangianfirst} that is invariant with respect to the base spacecraft's configuration, the singularity-free  EoM  reads:
\begin{multline}
    \begin{bmatrix}
        \mathbb{I}_{b} & \!\!\!\mathbb{O}_{b\times n}\!\\ \mathbb{O}_{n\times b} & \!\!\!{\hat{M}_{\m}}\!
    \end{bmatrix}\begin{bmatrix}
            \dot{P}_0\\\ddot{q}_\m
        \end{bmatrix}\!+\!\begin{bmatrix}
        -\tilde\ad_{P_0} & \mathbb{O}_{b\times n}\\ \mathbb{O}_{n\times b} & \hat{C}_{\m}
    \end{bmatrix}\begin{bmatrix}
            V_0\\ \dot{q}_\m
        \end{bmatrix}\!\!\\
        =\!\!\begin{bmatrix}
           F_{\eta_0}-\hat{F}_\orbit\\ \!\hat F_\m\!-\!\hat N_\m P_0-\hat{N}_\orbit\!\!
        \end{bmatrix}\!,\label{noninertial:EoM}
\end{multline}
\begin{equation}
    V_0=\big((g^\orbit_0)^{-1}\dot{g}^\orbit_0\big)^\vee=\hat{M}_0^{-1}(P_0-\A\dot{q}_\m) \label{noninertial:vcurlyevolution}
\end{equation}
where the manipulator's mass matrix $\hat{M}_\m(q_\m)$ and the locked mass matrix $M_0$ are defined in \eqref{noninertial:massmatrix}, the map $\A$ is defined in Definition \eqref{principalconnection}. Further,  
\begin{align}
     &\hat{C}_{\m}=\sum_{i=1}^{n}\frac{\partial \hat{M}_\mathfrak{m}}{\partial{q}_i}\dot{q}_i-\frac{1}{2}\begin{bmatrix}\frac{\partial \hat{M}_\mathfrak{m}}{\partial q_1}\dot{q}_\m~\cdots~\frac{\partial \hat{M}_\mathfrak{m}}{\partial q_n}\dot{q}_\m\end{bmatrix}^T\label{noninertial:coriolis}\\
     &\hat{N}_\m=\begin{bmatrix}
       \dot{q}_\m^T\frac{\partial \A^T}{\partial q_1} \\ \vdots \\\dot{q}_\m^T\frac{\partial \A^T}{\partial q_n}
       \end{bmatrix}+\frac{1}{2}\begin{bmatrix}P_0^TM_0^{-1}\frac{\partial M_0}{\partial q_1}M_0^{-1}\\  \vdots\\ P_0^TM_0^{-1}\frac{\partial M_0}{\partial q_n} M_0^{-1}\\\end{bmatrix}\nonumber\\
       &~~~~~~~~~~~~~-\A^T\textbf{ad}^T_{V_0}-\bigg(\sum_{i=1}^n (\frac{\partial \A}{\partial q_i}) \dot{q}_i\bigg)^T\label{noninertial:Nhat0}\\
       &\hat{N}_\orbit=-\A^T\dot{P}_\orbit-\big(\sum_{i=1}^n (\frac{\partial \A}{\partial q_i}) \dot{q}_i\big)^TP_\orbit\nonumber\\
       &~~~~~~~~~~~~~~+\begin{bmatrix}\frac{\partial \A}{\partial q_1}\dot{q}_\m & \cdots &\frac{\partial \A}{\partial q_n}\dot{q}_\m\end{bmatrix}^TP_\orbit\nonumber\\
    &~~~~~~~~~~~~~~+\begin{bmatrix}(M_0^{-1}\frac{\partial {M}_0}{\partial q_1}M_0^{-1})^T(P_\orbit+\frac{P_0}{2})\\  \cdots\\ (M_0^{-1}\frac{\partial M_0}{\partial q_n} M_0^{-1})^T(P_\orbit+\frac{P_0}{2})\end{bmatrix}^T P_\orbit\label{noninertial:Nhatorbit}\\
    &\hat{F}_\m\!=-\frac{\partial u}{\partial q_\m}\! +\! f_\mathfrak{m}\!-\!\A^Tf_{0}\!+\!(J_{\E,\m}^T\!-\!\A^TJ_{\E,0}^T)f_\E \nonumber\\
    &\hat{F}_\orbit=-\dot{P}_\orbit+\textbf{ad}^T_{V_0}P_\orbit
    \label{noninertial:f.mhat}
\end{align}
where the evolution of $P_\orbit(\theta,\dot{q}_\m)$ and $\dot{P}_\orbit(\theta,\dot{q}_\m)$ is found from Lemma \ref{noninertial:lemmaPorbit}.
Orbital pose $g^I_\orbit$ and velocity $\V_\orbit$ evolve in time according to Equations \eqref{eq:orbitpose} and \eqref{orbitvel} in Lemma \ref{orbitkinlemma}, respectively. Evolution of orbital true anomaly $\theta$ is found from Equation \eqref{noninertial:trueanomalyevolution} in Lemma \ref{orbitdyn}.
\end{theorem}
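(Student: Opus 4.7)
\medskip
\noindent\textbf{Proof proposal.} The plan is to apply the Hamilton--d'Alembert principle in the form of Lemma~\ref{noninertial:varlemma} to the block--diagonalised Lagrangian $\ell^{cm}$ in \eqref{noninertial:lcm}, then to split the resulting reduced equations along the two factors of the trivial principal bundle $Q_0\times Q_\m\to Q_\m$: the group (base--spacecraft) direction gives the Euler--Poincar\'e block, while the shape (manipulator) direction gives the reduced Euler--Lagrange block. Throughout, I would keep $g^I_\orbit(t)$ as an externally prescribed curve (with $\hat{\eta}_\orbit\equiv 0$) so that the variations act only on $(g^\orbit_0,q_\m)$, and I would use the constrained variation \eqref{noninertial:varomega0} of $\OOs$ together with $\delta q_\m$ and $\delta\dot q_\m$ with fixed endpoints.

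First I would work out the group block. Setting $\delta q_\m=0$ in \eqref{noninertial:hamiltonforlcm} and substituting $\delta\OOs=\dot\eta_0+[V_0,\eta_0]$, an integration by parts in $t$ turns the action into $\int\langle -\dot P+\tilde{\ad}_{P}V_0+F_{\eta_0},\eta_0\rangle\,dt$ with $P=M_0\OOs=\partial\ell^{cm}/\partial\OOs$. Arbitrariness of $\eta_0$ yields $\dot P=\tilde{\ad}_P V_0+F_{\eta_0}$. Next I would insert the decomposition $P=P_0+P_\orbit$ from \eqref{noninertial:momentum} and move the orbital part to the right--hand side: because $P_\orbit$ and $\dot P_\orbit$ are supplied in closed form by Lemma~\ref{noninertial:lemmaPorbit}, the combination $-\dot P_\orbit+\tilde{\ad}_{P_\orbit}V_0$ is exactly the symmetry--breaking forcing $\hat F_\orbit$ in \eqref{noninertial:f.mhat}, delivering the first block row of \eqref{noninertial:EoM} and, together with $P_0=M_0 V_0+M_{0\m}\dot q_\m=M_0 V_0+M_0\A\dot q_\m$, the algebraic relation \eqref{noninertial:vcurlyevolution}.

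Second I would derive the shape block by taking $\eta_0\equiv 0$ and computing $\tfrac{d}{dt}\partial_{\dot q_\m}\ell^{cm}-\partial_{q_\m}\ell^{cm}+F_\m$-type contributions, where I must also account for the residual contribution of $\delta\OOs=\A\,\delta\dot q_\m+\delta\A\,\dot q_\m$ to the first term in \eqref{noninertial:hamiltonforlcm}. Pairing with $P=M_0\OOs$ and using $P=P_0+P_\orbit$ produces three groups of terms: (i) the manipulator kinetic contribution $\hat M_\m\ddot q_\m+\hat C_\m\dot q_\m$ with $\hat C_\m$ as in \eqref{noninertial:coriolis}; (ii) the base--arm coupling terms $\hat N_\m P_0$ that collect $\partial_{q_i}\A^T$, $\partial_{q_i}M_0$ and $\ad^T_{V_0}\A^T$ contributions, matching \eqref{noninertial:Nhat0}; and (iii) the orbital forcing $\hat N_\orbit$ in \eqref{noninertial:Nhatorbit}, obtained by replacing the $P_0$--factors coming from the $\A$--pairing and the $\partial_{q_i}M_0$--pairing with the analytically known $P_\orbit$ and $\dot P_\orbit$. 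The manipulator external wrench combines with the end--effector wrench via $J_{\E,\m}^T-\A^T J_{\E,0}^T$ to give $\hat F_\m$.

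The main obstacle is the careful bookkeeping in the shape block: the connection $\A$ depends on $q_\m$ and appears both inside $\OOs$ and inside $\hat M_\m=M_\m-\A^TM_0\A$, so each $q_\m$--derivative splits into pieces that must be regrouped into the homogeneous $\hat N_\m P_0$ term and the inhomogeneous $\hat N_\orbit$ term without double--counting. The cleanest way to manage this is to first derive the equation in terms of the total momentum $P$ (as in the classical Lagrange--Poincar\'e equations of~\citep{blochbook,cendra2001geometric,moghaddam2022}) and only at the very end substitute $P=P_0+P_\orbit$, so that every term proportional to $P_\orbit$ or $\dot P_\orbit$ is collected into $\hat F_\orbit$ and $\hat N_\orbit$, and every term proportional to $P_0$ is collected into the homogeneous left--hand side or into $\hat N_\m P_0$. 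Closing the system with the analytic Keplerian evolution of $g^I_\orbit$, $\V_\orbit$ and $\theta$ from Lemmas~\ref{orbitdyn} and~\ref{orbitkinlemma} then completes the proof.
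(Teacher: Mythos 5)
Your proposal follows essentially the same route as the paper's proof in Appendix E: apply the reduced Hamilton--d'Alembert principle to $\ell^{cm}$ with the constrained variations of Lemma \ref{noninertial:varlemma}, collect the coefficients of the independent variations $\eta_0$ and $\delta q_\m$ to obtain the Euler--Poincar\'e and reduced Euler--Lagrange blocks, and split $P=P_0+P_\orbit$ so that all terms carrying $P_\orbit$ or $\dot P_\orbit$ (supplied analytically by Lemma \ref{noninertial:lemmaPorbit}) become the symmetry-breaking forcings $\hat F_\orbit$ and $\hat N_\orbit$. Your suggestion to perform the substitution $P=P_0+P_\orbit$ only at the end is a cosmetic reorganization of the same bookkeeping the paper carries out term by term (its $(I.A)$, $(I.B)$, $(II)$ decomposition), so the argument is correct and matches the paper's.
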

\begin{proof} 
The proof is provided in  Appendix \ref{app:prooftheorem} .
\end{proof}
For detailed calculation of the partial derivatives $\frac{\partial M_0}{\partial{q}_i}$, $\frac{\partial \hat{M}_\mathfrak{m}}{\partial{q}_i}$, and $\frac{\partial \A}{\partial{q}_i}$ in the LPKE in Theorem \ref{noninertialtheorem:EoM}, refer to \citep{moghaddam2022}.
\begin{remark}
    Describing the space manipulator's states and motion relative to the orbital frame ensures well-conditioned and stable EoM for numerical propagation. Directly integrating the motion in the inertial frame would lead to ill-conditioned EoM due to the large disparity in magnitudes between orbital motion (approximately $7.8 [km/s]$) and the manipulator’s relative motion (in the order of $1 [m/s]$). By incorporating Kepler’s equation into the formulation and adopting a quasi-inertial frame, these numerical issues are effectively mitigated.
\end{remark}

\begin{remark}
In the LPKE setting, the end–effector kinematics is  $V_e = J_{\E,0}(q)\,(V_\orbit + V_0) + J_{\E,\m}(q)\,\dot q,$ and, using the mechanical connection, the disturbed base motion is $V_0 = -\,\A(q)\,\dot q + M_0^{-1}P_0,$ where the bias $M_0^{-1}P_0$ collects  the analytic Keplerian forcing. Substituting gives $V_e = \big(J_{\E,\m} - J_{\E,0}\A\big)\dot q + J_{\E,0}M_0^{-1}P_0,$ so the total Jacobian $\mathbb{J}=J_{\E,\m}-J_{\E,0}\A$ governs the joint–rate contribution, while $J_{\E,0}M_0^{-1}P_0$ captures LPKE corrections due to orbital forcing. In the undisturbed, zero–momentum case with no symmetry–breaking orbital effects $V_\orbit\equiv 0$, $b_{\orbit}\equiv 0$, and $P_0=0$, this reduces to $V_e=\mathbb{J}(q)\,\dot q,$ which recovers exactly the Generalized Jacobian Matrix (GJM) of Umetani and Yoshida~\citep{umetani1987continuous}.
\end{remark}


\section{Numerical Validation}\label{noninertial: Numerical Study}
We validate the proposed Lagrange–Poincaré–Kepler Equations (LPKE), derived in Theorem \ref{noninertialtheorem:EoM}, by comparing its numerical behavior against a previously validated benchmark Simscape Multibody model of a 7-DoF  space manipulator onboard a free-floating spacecraft moving within a non-inertial orbital framework \citep{moghaddam2022,moghaddam2021guidance}. All simulations are conducted using MATLAB/Simulink/Simscape (codebase available at: \url{https://github.com/BmMoghaddam/Orbital-paper}).

We perform a comparative study of three simulation approaches against the benchmark to establish the accuracy of the numerical simulation based on the LPKE in \eqref{noninertial:EoM} in capturing the effects of the moving orbital reference frame:
\begin{itemize}
\item \textbf{Benchmark}:  previously validated Simscape model simulating the manipulator in a moving orbital frame with high fidelity and precise numerical integration\citep{moghaddam2022}. The benchmark is placed on the exact same orbital trajectory as all other modes and uses identical joint configurations and time-steps, thereby serving as a high-accuracy reference for evaluating model performance.
\item \textbf{Mode I}: LPKE-based dynamics in Theorem \ref{noninertialtheorem:EoM} using Keplerian orbital propagation relative to the quasi-inertial frame, as per Lemma \ref{orbitkinlemma}. 

\item \textbf{Mode II}: LPKE-based dynamics with orbital propagation relative to the perifocal frame. 

\item \textbf{Mode III}: LPE model with fully coupled orbital dynamics and integration of orbital motion via fundamental Newton's gravitational equations. 
\end{itemize}
All simulation modes capture Coriolis and coupling effects introduced by orbital motion into the locked spacecraft-manipulator dynamics. Due to its physical modeling and validated multibody representation, the benchmark provides a ground truth for evaluating the LPKE approximations in the proposed modes.

\subsection{Simulation Platform}

The \textit{Benchmark} model employs a previously validated Simscape Multibody model representing a 7-link,  7-DoF  KUKA LBR iiwa  robotic manipulator mounted on a  $10 kg$  cubic spacecraft (see Figure \ref{fig:simmodel})\cite{moghaddam2022}. The model incorporates a 6-DoF  joint to capture the relative motion of the end-effector with respect to the orbital frame, alongside seven sequential revolute joints parameterized using twist representations according to \eqref{expparam}.
\begin{figure}
    \centering
    \begin{subfigure}[b]{0.49\columnwidth}
        \centering
        \includegraphics[width=2.8cm]{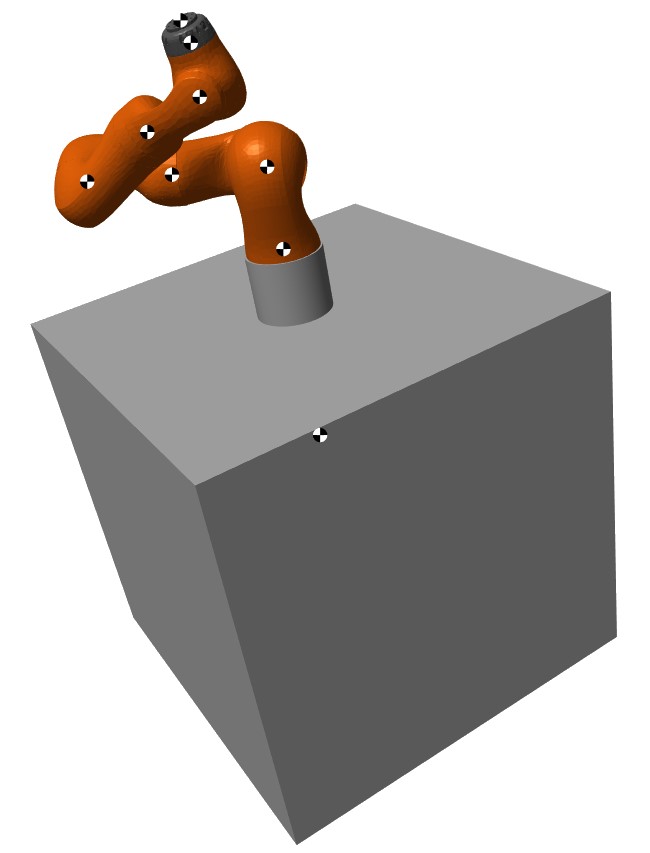}
        \caption{Simscape model}
        \label{fig:simmodel}
    \end{subfigure}
    \hfill
    \begin{subfigure}[b]{0.49\columnwidth}
        \centering
        \includegraphics[width=2cm]{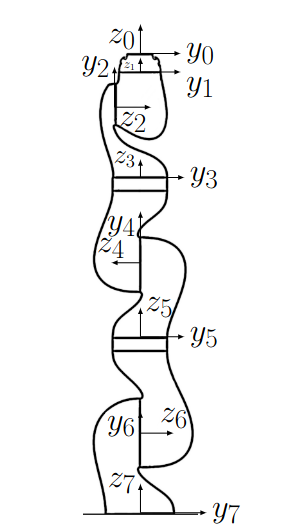}
        \caption{Arm Frame Assignment}
        \label{sim:frameassignment}
    \end{subfigure}
    \caption{Simscape model and the arm frame assignment}
\end{figure}
Table \ref{tab:inertia} presents the geometric and inertial characteristics of the simulated system, which are derived from the realistic specifications of a KUKA LBR iiwa 7-DoF  robotic manipulator mounted on a $10 kg$ cubic spacecraft.  While chaser spacecraft in on-orbit servicing (OOS) missions are typically heavier than their manipulators, we intentionally select a relatively low base mass (10~kg) here to stress–test base–arm coupling. This challenging mass ratio amplifies reaction dynamics and produces larger base–attitude excursions, allowing us to demonstrate that the LPKE formulation faithfully captures strong momentum coupling and large angular motion of the base. The objective is methodological—evaluating the formulation under edge-case conditions—rather than prescribing mission sizing.

\begin{table}[htbp]
  \centering
  \caption{Geometric and inertial properties of the simulated system}
\resizebox{!}{0.19\columnwidth}{\begin{tabular}{|c|cccc|}
\hline
 Body & {$l_i$} [$10^{-2}m$] & $\rho_{cm}$ [$10^{-2}m$] & $m_i$[$kg$] & $I_{xx} ,~ I_{yy},~I_{zz}$ [$kg\times m^2$] \\ \hline
  7 & $100$ & -$\begin{bmatrix}  0&0 &50 \end{bmatrix}$  & 10 &  $\begin{bmatrix}1670~ & 1670& 1670\end{bmatrix}$  \\
6    & $19$  & -$\begin{bmatrix}  0&0 &19 \end{bmatrix}  $     & 3.4525 &  $\begin{bmatrix}74.7~~ & 57.4~& 23.9   \end{bmatrix}$  \\
 5 & $21$      & -$\begin{bmatrix}  0&0 &21 \end{bmatrix} $   & 3.4822 &  $\begin{bmatrix}39.0 ~~& 27.9~ & 19.9  \end{bmatrix}$ \\
 4   & $19$    & -$\begin{bmatrix}  0&19 &0 \end{bmatrix}$      & 4.0562  &  $\begin{bmatrix} 104.2~  &78.3~ &34.1 \end{bmatrix}$   \\
 3      & $21$      & -$\begin{bmatrix}  0&0 &21 \end{bmatrix}$    &3.4822 &  $\begin{bmatrix}41.4~~ &24.8 ~& 23.4\end{bmatrix}$  \\
2    & $19$     & -$\begin{bmatrix}  0&19 &6 \end{bmatrix}$  & 2.1633 &  $\begin{bmatrix}
     16.37& 18.27& 12.1 
 \end{bmatrix}  $    \\
 1 & $8.1$    & -$\begin{bmatrix}  0&6 &8.1 \end{bmatrix}$     &  2.3466 &  $\begin{bmatrix}6.5~~~~ & 6.3~~~ & 4.5\end{bmatrix}$  \\
0 & $4.5$   & -$\begin{bmatrix}  0&0 &4.5 \end{bmatrix}$ & 3.129    &  $\begin{bmatrix}
     15.9~~& 15.9~~& 2.9  
 \end{bmatrix} $ \\\hline
\end{tabular}}
\label{tab:inertia}
  
\end{table}
 
The moments of inertia for links are computed based on their geometries and align with the publicly available properties of the  KUKA  LBR iiwa7 model \citep{KUKALBRiiwa}. Moments of inertia of the spacecraft are simply computed from $\mathcal{I}_7=\big({m_7 l_7^2}/{6}\big)\mathbb{I}_{3 \times 3}$, assuming uniform density and cubic geometry. Here $\mathcal{I}_7, m_7, l_7$ are inertia, mass and length of the spacecraft, respectively. Note that the spacecraft and the end-effector are indexed $7$ and $0$, respectively. All inertial properties (Table~\ref{tab:inertia}) are matched across simulation modes to ensure consistency. The manipulator is initialized as follows:

\begin{equation}
\begin{aligned}
q_\m &= \begin{bmatrix}0.2 & 1.2&0.3&2.5&0.5&1.5&0.6 \end{bmatrix}^T, \\
\dot{q}_\m &= 10^{-2}\times \begin{bmatrix} 0 & 1 & 0 & 1 & 0 & 0 & 0 \end{bmatrix}^T.
\end{aligned}
\end{equation}

we extracted the linear positions of the joints relative to Body $0$ (end-effector) in the upright configuration according to Figure \eqref{sim:frameassignment}:
\begin{align*}
    &\rho_1=\begin{bmatrix}0 &0 &-4.5 \end{bmatrix}^T, 
    &\rho_2=\begin{bmatrix}0 &-6 &-12.6 \end{bmatrix}^T,\\ &\rho_3=\begin{bmatrix}0 &0 &-31.6\end{bmatrix}^T, &~~~~
    \rho_4=\begin{bmatrix}0 &0 &-52.6 \end{bmatrix}^T,\\ &\rho_5=\begin{bmatrix}0 &0 &-71.6 \end{bmatrix}^T,
    &~~~~\rho_6=\begin{bmatrix}0 &0 &-92.6 \end{bmatrix}^T,\\
    &\rho_7=\begin{bmatrix}0 &0 &-111.6 \end{bmatrix}^T, &[cm]
\end{align*}\normalsize
and the axes of rotation of the revolute joints $1,\cdots,7$ in the shown upright configuration as:
\begin{align*}
    w_1=w_3=w_5=w_7=&\begin{bmatrix}0 & 0 & 1\end{bmatrix}^T\\
    w_2=-w_4=w_6=&\begin{bmatrix}0 & 1 & 0\end{bmatrix}^T
\end{align*}\normalsize
The \textit{benchmark} space manipulator model moves relative to a non-inertial orbital frame which itself moves on an orbital path relative to a central inertial perifocal frame. Orbital elements of this orbit are configured for a near-circular GEO orbit (Table~\ref{tab:orbitalparameter}).
\begin{table}[htbp]
\caption{Orbital Parameters}
  \centering
\begin{tabular}{|c|cccccc|}
\hline
 Property & $\Omega$ & $\omega$ & $i$ & $a ~ [km]$ & $e$ &$\bar{\theta}|_{t=0}$ \\ \hline
  Value & $0$ & $0$  & $0$ &  $42157.08431$ & $2\times 10^{-6}$ & $0$  \\\hline
\end{tabular}
\label{tab:orbitalparameter}
\end{table}
In Table \ref{tab:orbitalparameter}, $i$ is the inclination, which is the angular distance between the orbital plane and Earth's equator, $\Omega$ is the right ascension of the ascending node (intersection of the orbital plane and the equatorial plane), $\omega$ is the argument of periapsis defining the angle between the ascending node and the perigee of the orbit ellipse,  $e$ is the orbit eccentricity, $a$ is the semi-major axis of the orbit,$\bar{\theta}|_{t=0}$ is the true anomaly at the time of start of the simulation. As can be seen, through putting $\Omega=0,~\omega=0~\& ~ i=0$ the orbital plane is chosen such that it overlaps with the Earth's equator. This is valid since the orientation of the orbital plane is inconsequential to the shape and amount of orbital disturbing effects on the space manipulator. 
\begin{figure}
    \centering
    \includegraphics[width=0.99\columnwidth]{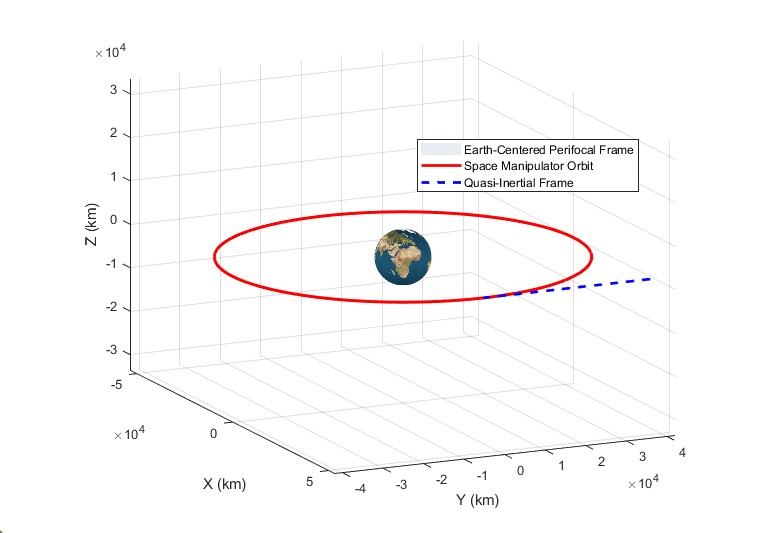}
    \caption{3D representation of the orbit and the quasi-inertial frame's paths around Earth}
    \label{fig:3d}
\end{figure}
Figure \ref{fig:3d} shows the three-dimensional orbital path of the spacecraft-manipulator system, visualized with respect to the Earth-centered inertial frame. This provides spatial context for the elliptical trajectory used in the simulation, along with the trajectory of the quasi-inertial frame. The orbit is chosen so that it represents the deviation of a chaser space manipulator's path from an orbital debris in GEO orbit in common vicinity operations. The offset between the initial velocity of the space manipulator orbit and a perfect GEO orbit is $7~[cm/s]$ , corresponding to the relative velocity of two objects on the same GEO orbit that are $|\rho^n_\t|=1.2684m$ or $\Delta \theta=2.3721\times 10^{-05} [rad]$ apart \citep{curtis2013orbital}, faithful to the initiation configuration of an in-orbit space manipulator operation. The orbital trajectory deviates from a reference GEO path by up to $74cm$ in the first $120s$ and by $15~m$ over a full orbital revolution. Figure \ref{fig:orbitalpaths} establishes a verification of: (i) the simulated orbital motion relative to the perifocal frame, (ii) the simulated motion of the quasi-inertial frame relative to the perifocal frame, and (iii) the relative path of the orbital frame to the quasi-inertial frame observed from the quasi-inertial frame.
\begin{figure}
    \centering
    \begin{minipage}{0.32\columnwidth}
    \includegraphics[width=2.9cm]{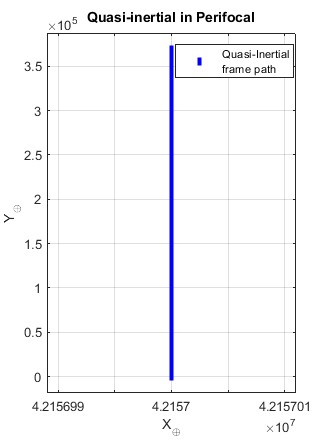}
    \end{minipage}
    \begin{minipage}{0.32\columnwidth}
    \includegraphics[width=2.9cm]{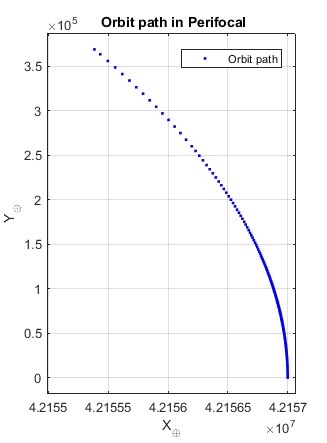}
  \end{minipage}    
  \begin{minipage}{0.32\columnwidth}
    \includegraphics[width=2.9cm]{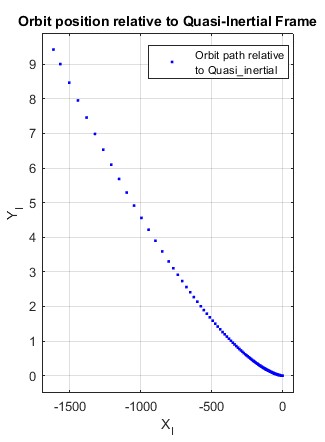}
  \end{minipage} 
  \caption{Path of the Orbital and Quasi-Inertial Frame and the relative position of orbit to the quasi-inertial frame}\label{fig:orbitalpaths}
\end{figure}

\textit{Simulation Mode I} employs an LPKE for the propagation of the motion of the space manipulator defined in Table \ref{tab:inertia}. This mode benefits from more moderate orbital velocities and distances (due to frame choice), which improves numerical conditioning. The twist vectors $\xi_i$ in Table \ref{Twists} are extracted from the joint axes of rotation and linear positions of joints in Figure \ref{sim:frameassignment}. The space manipulator is placed on the exact same orbital trajectory as all other modes and uses identical joint configurations and time-steps to the benchmark, 
\begin{table}[htbp]
\caption{Manipulator twists}
    \centering
    \resizebox{!}{1.6cm}{\begin{tabular}{cccccccc}
     $\xi_1$& $\xi_2$ & $\xi_3$ & $\xi_4$ & $\xi_5$ & $\xi_6$ & $\xi_7$ &{} \\\cmidrule{1-7}\\
    $\left[\begin{matrix}0\\0\\0\\0\\0\\1\end{matrix}  \right]$ & $\left[\begin{matrix}0.126\\0\\0\\0\\1\\0\end{matrix} \right]$ & $\left[\begin{matrix}0\\0\\0\\0\\0\\1\end{matrix} \right]$ & $\left[\begin{matrix}0\\-0.526\\0\\0\\-1\\0\end{matrix} \right]$&$\left[\begin{matrix}0\\0\\0\\0\\0\\1\end{matrix} \right]$ &  $\left[\begin{matrix}0\\0.926\\0\\0\\1\\0\end{matrix}  \right]$& $\left[\begin{matrix}0\\0\\0\\0\\0\\1\end{matrix}  \right]$
    \end{tabular}}\label{Twists}\vspace{0.5cm}
\end{table}

\textit{Simulation Mode II} also employs the LPKE formulation, but the orbital motion is expressed relative to the perifocal frame.  The Keplerian trajectory is computed in closed form using perifocal-frame dynamics, leading to significantly larger position and velocity magnitudes (in the order of hundreds to thousands of kilometers and velocities of several km/s). This greater disparity between orbital and manipulator dynamics introduces numerical stiffness into the integration process.

\textit{Simulation Mode III} employs the Lagrange–Poincaré Equations (LPE), where the orbital motion is not expressed through closed-form Keplerian solutions but is instead directly integrated using Newton's fundamental laws of gravitational dynamics. The spacecraft’s orbital state is dynamically coupled with the manipulator motion, resulting in a fully integrated model that captures mutual interactions through direct numerical propagation of gravitational acceleration. This includes disturbance terms $F_\orbit$ and $\hat{N}_\orbit$ from Theorem \ref{noninertialtheorem:EoM}, represented as external inputs. This mode adds dynamic complexity on top of Mode II’s ill-conditioning.



\subsection{Simulation Results}
Figures \ref{fig:orbitalverification1} and \ref{fig:orbitalverification}  illustrate joint trajectory comparisons between the benchmark and LPKE-based simulations. In each subplot, the solid lines represent the benchmark joint trajectories, while the dashed lines show corresponding joint trajectories simulated using \textit{Modes I}, \textit{II}, and \textit{III}, respectively. Each mode simulates the same motion using its internal dynamics and with an integration step size equal to the corresponding benchmark. \textit{Mode I} exhibits near-exact agreement with \textit{Benchmark}, with maximum absolute joint trajectory deviation less than $10^{-11}$ radians throughout the simulation time scope, consistent with machine precision for double-precision integration obtained using a fixed-step Runge–Kutta (RK) integrator. This result substantiates the structural and numerical fidelity of the LPKE formulation. Thus, we establish the dynamic equivalency of the LPKE in Theorem \ref{noninertialtheorem:EoM} and Benchmark Simscape model. In contrast, \textit{Mode II} introduces propagation errors due to the large disparity in distances and velocities between orbital and internal manipulator motion inherent in the perifocal frame. These disparities lead to ill-conditioned dynamics matrices, resulting in numerical stiffness and degraded performance. This stiffness introduces integration difficulties due to rapidly varying dynamics embedded in slow orbital states, potentially degrading solver accuracy. \textit{Mode III} suffers from both the same disparity in scales and the added burden of tightly coupled manipulator-orbit integration, amplifying stiffness and solver instability. Although it directly integrates the full coupled orbital-manipulator dynamics, the combined complexity and scale mismatch further degrade accuracy and simulation speed. \textit{Modes II} and \textit{III} experience growing drift, reinforcing the long-term stability and structure-preserving benefits of the full LPKE approach in \textit{Mode I}.  Figure \ref{fig:orbitalverification} confirms that the LPKE model not only outperforms \textit{Modes II} and \textit{III} in accuracy and stability, but also faithfully reproduces the dynamics captured by the high-fidelity Simscape benchmark, while offering a closed-form and computationally efficient alternative suitable for model-based control and real-time motion planning. 
\begin{figure*}[htbp]
    \centering
    \begin{minipage}{0.68\columnwidth}\includegraphics[width=0.99\textwidth]{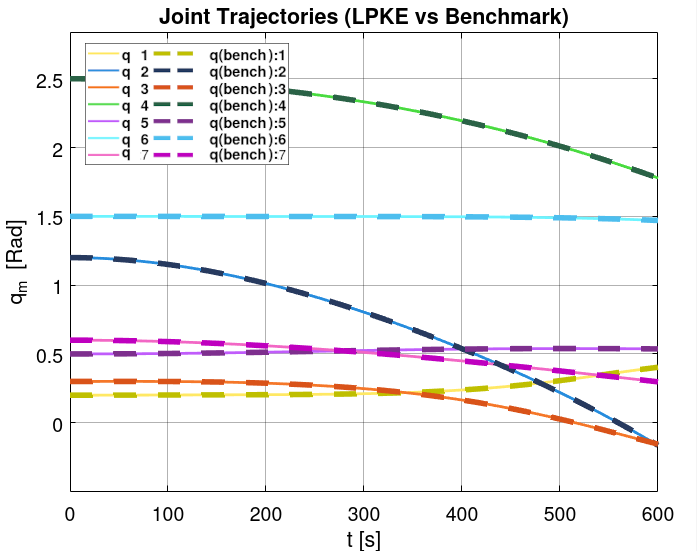}
    \end{minipage}
    \begin{minipage}{0.67\columnwidth}
    \includegraphics[width=0.99\textwidth]{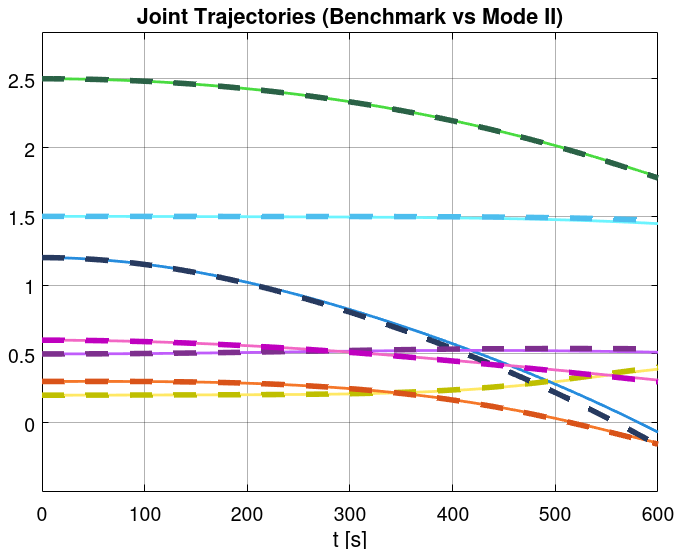}
    \end{minipage}
    \begin{minipage}{0.68\columnwidth}
    \includegraphics[width=0.99\textwidth]{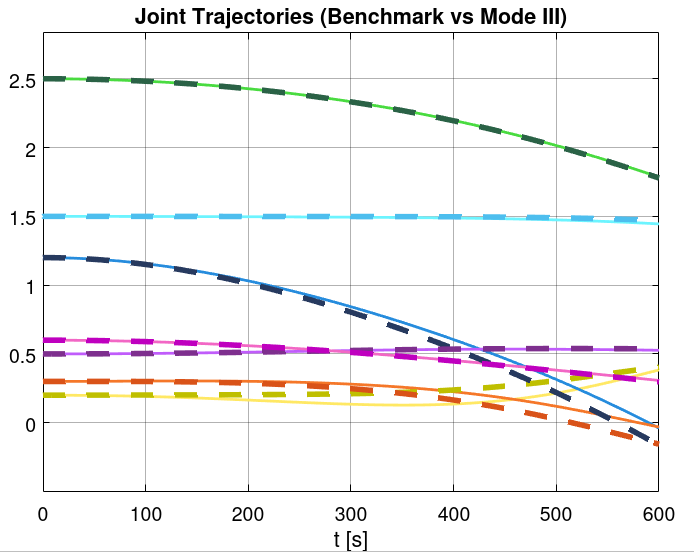}
    \end{minipage}
    \caption{Joint trajectory comparison of the benchmark Simscape simulation and: \textbf{Mode I} analytical LPKE propagation (left), \textbf{Mode II} LPKE formulation relative to the perifocal frame (middle), and \textbf{Mode III} direct integration of orbital differential equations into LPE (Right)}
    \label{fig:orbitalverification1}
\end{figure*}
\begin{figure*}[htbp]
    \centering
    \begin{minipage}{0.68\columnwidth}\includegraphics[width=0.99\textwidth]{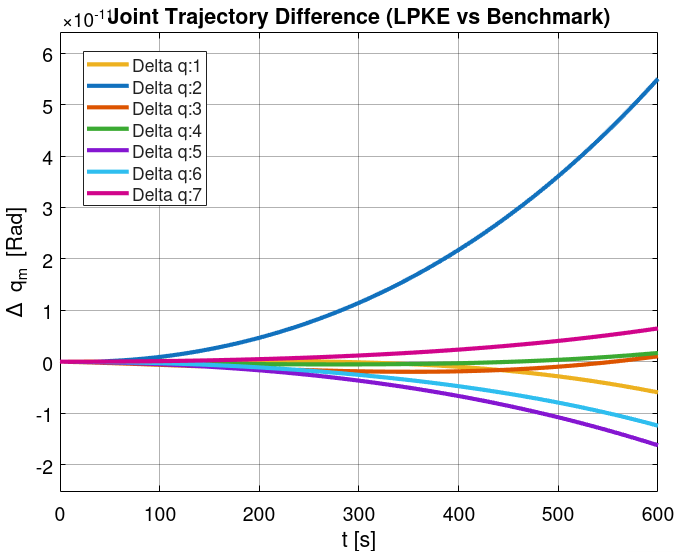}
    \end{minipage}
    \begin{minipage}{0.68\columnwidth}
    \includegraphics[width=0.99\textwidth]{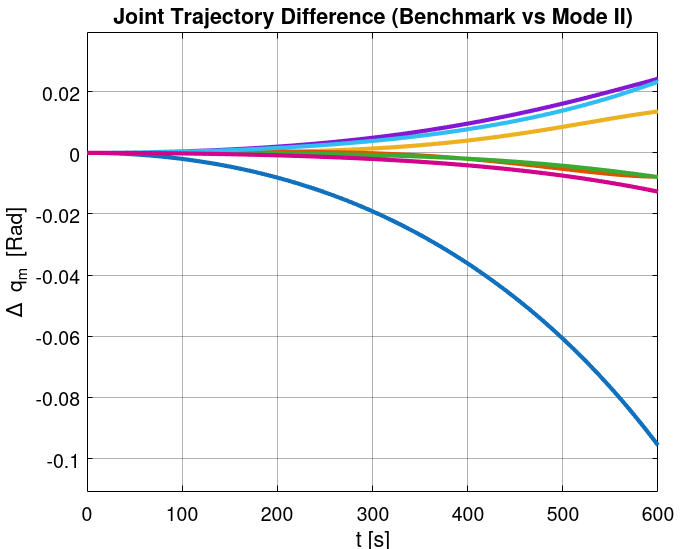}
    \end{minipage}
    \begin{minipage}{0.68\columnwidth}
    \includegraphics[width=0.99\textwidth]{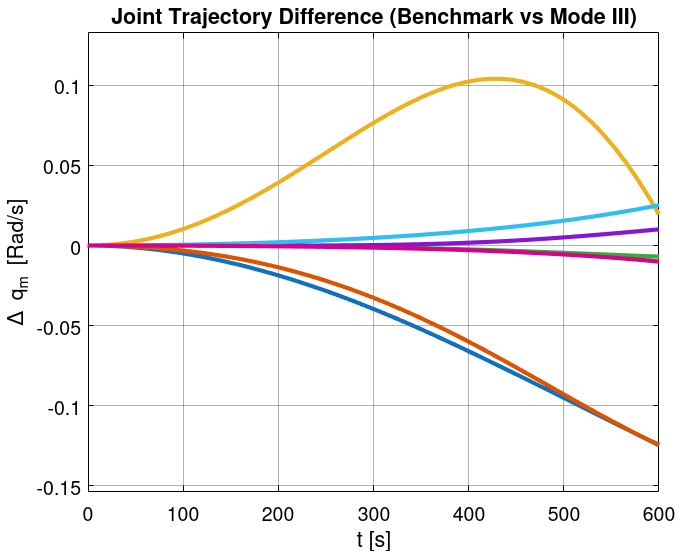}
    \end{minipage}
    \caption{Joint trajectory error comparison of the benchmark (Simscape) simulation and: \textbf{Mode I} analytical LPKE propagation (left), \textbf{Mode II} LPKE formulation relative to the perifocal frame (middle), and \textbf{Mode III} direct integration of orbital differential equations into LPE (Right)}
    \label{fig:orbitalverification}
\end{figure*}
\begin{figure*}[htbp]
    \centering
    \begin{minipage}{0.5\columnwidth}\includegraphics[width=0.99\textwidth]{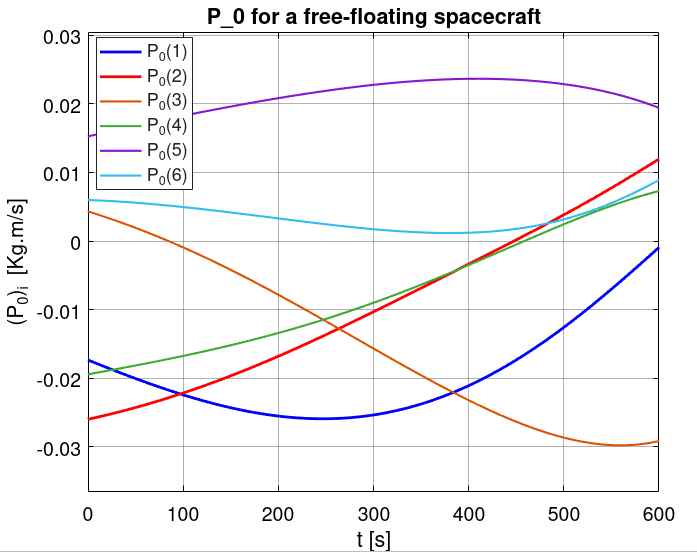}
    \end{minipage}
    \begin{minipage}{0.5\columnwidth}
    \includegraphics[width=0.99\textwidth]{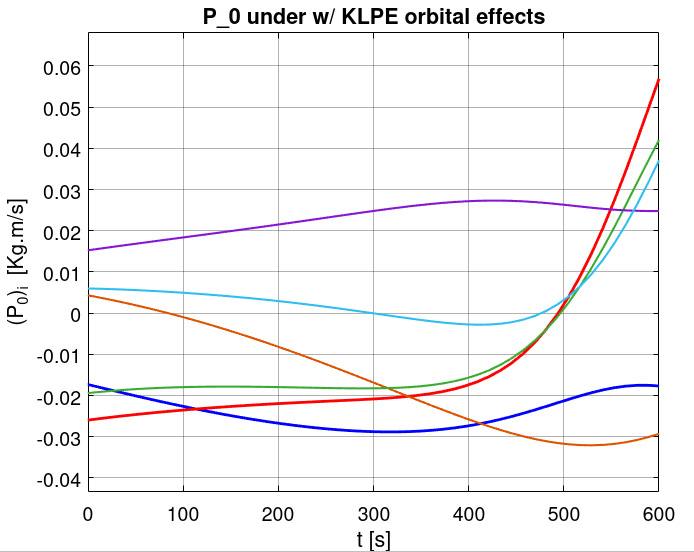}
    \end{minipage}
    \begin{minipage}{0.5\columnwidth}
    \includegraphics[width=0.99\textwidth]{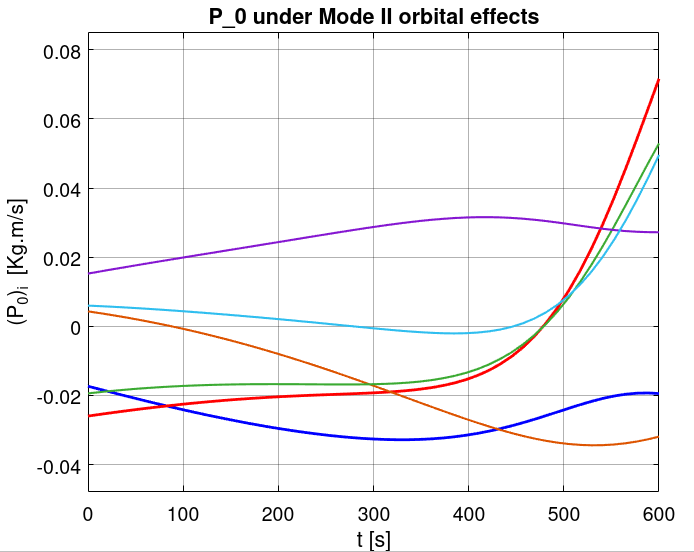}
    \end{minipage}
    \begin{minipage}{0.5\columnwidth}
    \includegraphics[width=0.99\textwidth]{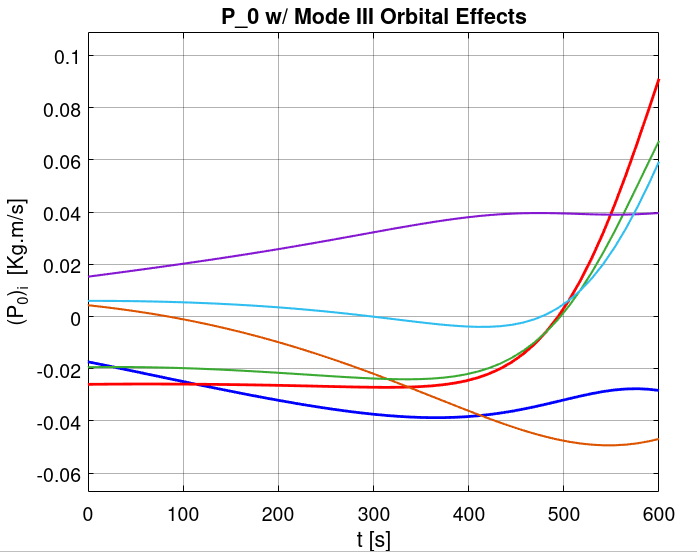}
    \end{minipage}
    \caption{Momentum $P_0$ evolution comparison of a free-floating space manipulator under no orbital disturbances (first plot on the left) and: \textbf{Mode I} analytical LPKE propagation (second plot), \textbf{Mode II} LPKE formulation relative to the perifocal frame (thirdplot), and \textbf{Mode III} direct integration of orbital differential equations into LPE (fourth plot)}
    \label{fig:momentumverification}
\end{figure*}

\subsubsection{Computational Efficiency}

Table~\ref{tab:comparison} summarizes the average computational cost of the three numerical approaches over 200 simulation runs, each with a 5-minute simulation horizon. All models are implemented in MATLAB using a fixed-step fourth-order RK method, ensuring consistent solver settings across all comparisons.
Among all modes, the LPKE formulation (\textit{Mode I}) is the most computationally efficient, with an average runtime of approximately 6.1 seconds per simulation. This performance stems from its closed-form integration of orbital effects using analytical Keplerian solutions, which avoids the stiffness and coupling challenges present in other models.
By contrast, \textit{Mode II} incurs nearly double the runtime (11.32 seconds on average). This increase is attributed to the need for additional evaluations of orbital coupling terms and their propagation within the inertial Lagrangian framework. While this approach remains consistent with a decoupled dynamic modeling philosophy, it introduces moderate overhead due to the exogenous updates.
\textit{Mode III}, which directly integrates Newtonian orbital dynamics into the manipulator's LPE, is the most computationally demanding of the three, with an average runtime of 11.59 seconds. This formulation requires simultaneous integration of the manipulator and orbital state dynamics, leading to stiffness and ill-conditioning in the combined system matrices. Consequently, this mode places a higher burden on numerical solvers and may be less suitable for real-time deployment.

\begin{remark}
Figure \ref{fig:momentumverification} illustrates the evolution of the linear and angular momentum vector $P_0$ across the three simulation modes. For the free--floating case, the apparent variation of $P_0$ is simply a rotated observation of the conserved total momentum $\mu$ when viewed in the spacecraft frame, consistent with theoretical expectations of momentum conservation. By contrast, in the disturbed orbital settings (Modes I, II and III), $P_0$ evolves nontrivially under the influence of symmetry--breaking orbital effects. Among these, the LPKE formulation most accurately follows the benchmark Simscape simulation of momentum evolution. This agreement confirms that the LPKE captures both the conservation structure of the free--floating regime and the disturbance--driven momentum exchange induced by orbital forcing.  
\end{remark}

Overall, the LPKE model achieves a favorable balance between fidelity and efficiency, providing a closed-form and scalable solution suitable for onboard applications. The dominant contributor to simulation slowdown is the magnitude difference between orbital and manipulator velocities. This explains why \textit{Mode I} is significantly faster than both \textit{Mode II} and \textit{Mode III}, whereas \textit{Mode II} and \textit{III} have comparable average runtimes despite their formulation differences.
\begin{table}[htbp]
    \centering
    \caption{Comparison of Numerical Efficiency of Models Over 200 Runs}
    \begin{tabular}{|lccc|}
        \hline
        {} & \textbf{Mode I} & \textbf{Mode II} & \textbf{Mode III}\\
        \hline
        Ave. Simulation Time (s) & 6.0996 & 11.3228 & 11.5892\\
        \hline
    \end{tabular}
    \label{tab:comparison}
\end{table}

\begin{remark}
We further evaluated LPKE by repeating the simulations over 100 orbital eccentricities $e\in[0,0.2]$  and 100 altitudes spanning LEO to GEO, thereby inducing a broad range of orbit–attitude coupling disturbances.  We also varied the inclination over $i\in[0^\circ,30^\circ]$ and, as intuitively anticipated from the specialization by LPKE to planar orbital motion, observed no material change in behavior. Across all cases, LPKE achieved accuracy comparable to the benchmark while retaining clear computational advantages: Mode~II and Mode~III exhibited average excess runtimes of $79\%$ and $87\%$ relative to LPKE, respectively. Representative joint–trajectory comparisons for a LEO circular case and a highly elliptical LEO case ($e=0.2$) (see Figs.~\ref{fig:LEO} and \ref{fig:LEOe}, respectively) further illustrate the consistency of LPKE’s performance across various orbital conditions.
\begin{figure}
    \centering
    \begin{subfigure}[b]{0.49\columnwidth}
        \centering
        \includegraphics[width=1.07\columnwidth]{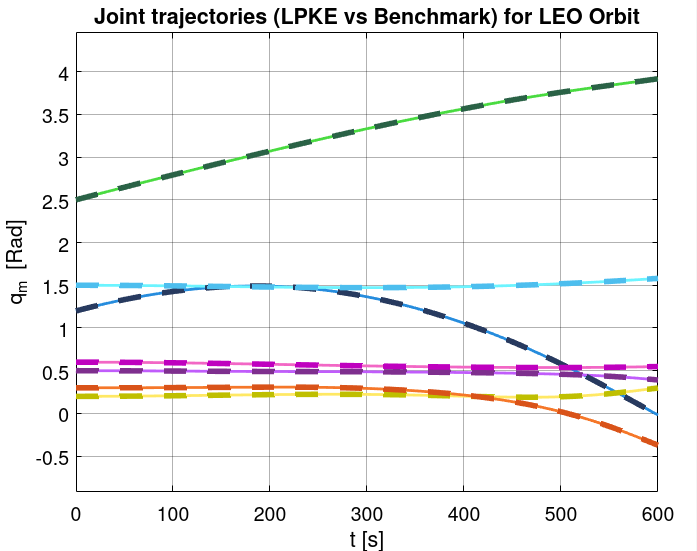}
        
    \end{subfigure}
    \hfill
    \begin{subfigure}[b]{0.49\columnwidth}
        \centering
        \includegraphics[width=1.07\columnwidth]{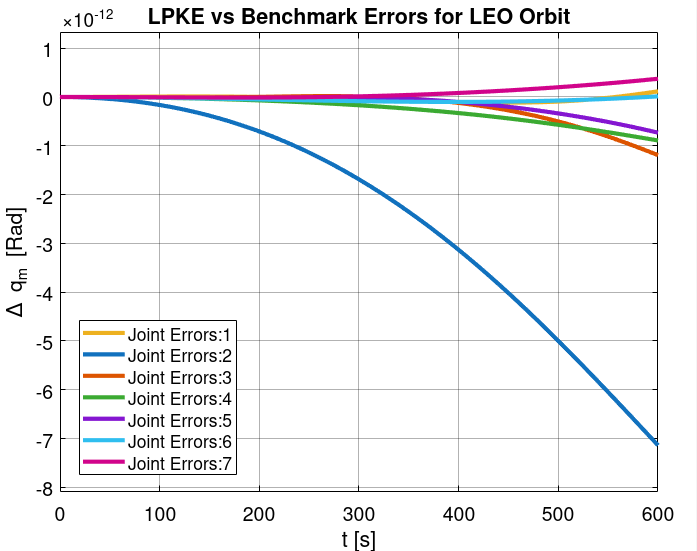}
    \end{subfigure}
    \caption{Joint trajectory evolution (Left) and Joint trajectory errors (Right)  comparison of LPKE vs Benchmark for space manipulator on LEO orbit}
        \label{fig:LEO}
\end{figure}

\begin{figure}
    \centering
    \begin{subfigure}[b]{0.49\columnwidth}
        \centering
        \includegraphics[width=1.07\columnwidth]{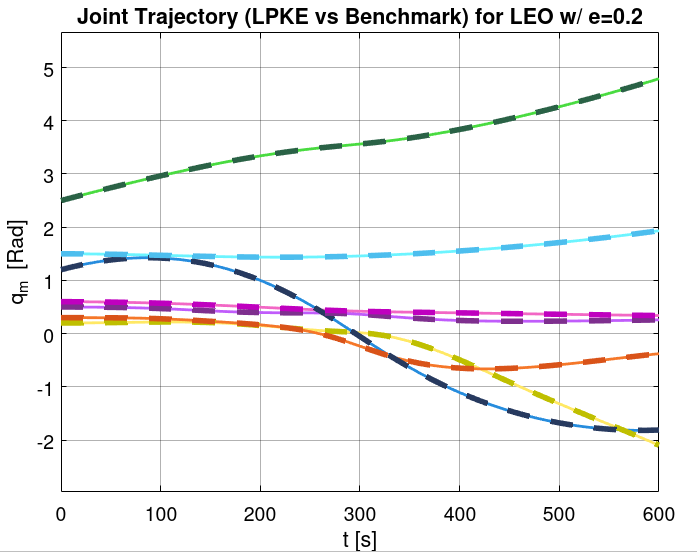}
        
    \end{subfigure}
    \hfill
    \begin{subfigure}[b]{0.49\columnwidth}
        \centering
        \includegraphics[width=1.07\columnwidth]{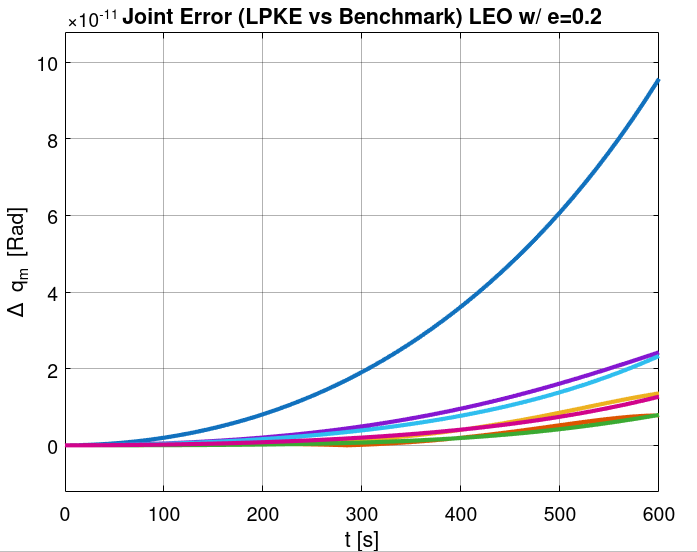}
    \end{subfigure}
    \caption{Joint trajectory evolution (Left) and Joint trajectory errors (Right)  comparison of LPKE vs Benchmark for space manipulator on LEO orbit with high eccentricity (e=0.2)}
        \label{fig:LEOe}
\end{figure}

\end{remark}

\subsubsection{Discussion}

The LPKE formulation (\textit{Mode I} ) offers both physical and computational advantages: Physically, it captures orbital coupling, Coriolis effects, and symmetry-breaking disturbances in closed form. Numerically, it avoids ill-conditioning typical of inertial-frame formulations caused by the large disparity in magnitudes between orbital and manipulator dynamics. Algorithmically, it supports structure-preserving integration and eliminates redundant coordinates in propagation, making it well-suited for embedded control applications.
In contrast, \textit{Modes II} and \textit{III}  fail to maintain accuracy over extended periods due to compounded numerical integration errors and poor conditioning of the full-system dynamic matrices.

The demonstrated behavior of the LPKE under different orbital frames carries direct implications for real-time simulation, model-based control, and planning. Simulations using LPKE in the quasi-inertial frame (\textit{Mode I}), are more computationally efficient and stable, suggesting their suitability for real-time applications, including onboard planning and predictive control. Its ability to integrate accurately at reduced computational cost makes it a promising candidate for real-time controllers and planners.
By contrast, the perifocal frame's use in \textit{Mode II} and the complex coupled integration in \textit{Mode III} lead to increased stiffness and computational overhead, reducing the feasibility of these approaches for real-time applications, especially in the presence of additional perturbations or real-world actuator and sensor constraints. In particular, the need to resolve large orbital dynamics alongside slow manipulator dynamics imposes a substantial computational burden. For applications requiring fast, onboard computation, LPKE in the quasi-inertial frame (\textit{Mode I}) offers a structurally sound and numerically practical solution.
\section{Conclusion} \label{conclusion}
In this work, we extended the LPE for spacecraft-manipulator systems from an inertial frame to an orbital frame, in order to capture the largest effects that orbital motion has on the safety of control of space manipulators. The LPKE provided equations that remain free of parametrization singularities, enabling the development of model-based control laws for space manipulators that avoid such singularities. By incorporating symmetry-breaking forces that stem from orbital motion into the structure of the LPE developed in \citep{moghaddam2022}, LPKE integrated the effects of orbital motion into the established LPE while maintaining the closed-form matrix EoM developed in \citep{moghaddam2022}, and similar structural matrices to that model. 
The proposed formalism effectively decoupled the orbital dynamics, locked spacecraft-manipulator dynamics, and equivalent fixed-base manipulator internal dynamics. Thus, it retained the model's utility for hardware-in-the-loop (HIL) simulations and singularity-free control development, avoiding the need for base spacecraft pose parametrization and preserving geometric properties as observed from the spacecraft. As future work, we will integrate the LPKE framework with full hardware-in-the-loop and model-based control stacks.  Orbital motion solely introduced disturbing forces to the LPE, leading to a CoM  oM shift up to $4m/min$ for the sample GEO spacecraft, a significant disturbance in microgravity operations. The decoupling of orbital, spacecraft, and manipulator dynamics, along with the  independence of the shape of EoM from orbit and spacecraft's pose, is integral to the development of advanced safe geometric control schemes and simulation frameworks.


\printcredits

\bibliographystyle{cas-model2-names}
\bibliography{cas-dc-template}{}

\appendix
\section{Proof of Lemma \ref{orbitdyn}}\label{app:orbitkinematics}
For an orbit with true anomay $\theta$, the Eccentric anomaly is found from $E=2 tan^{-1}\left(\frac{1+e_\orbit}{1-e_\orbit} tan(\frac{1}{2}\theta)\right)$\citep{curtis2013orbital}.
The orbital parameter \textit{Mean Anomaly} $\mathcal{MA}_\orbit(t)$ corresponds to the fraction of the orbital period $T_\orbit$ that has elapsed since the space-manipulator system passed the periapsis of the orbit. The period $T_\orbit$ is the time it takes for the space manipulator to cross the same position relative the Earth-centered frame $\oplus$. Thus,
\begin{align}
    \mathcal{MA}_\orbit(t)=\mathcal{MA}_\orbit(t=0)+\frac{\sqrt{\mu_\orbit^3}}{\mathcal{G}m_\oplus}t,\label{meananomaly}
\end{align}
where $t$ represents the time past the initiation of the mission/model, $\mathcal{G}m_\oplus$ is the gravitational constant of the Earth, commonly approximated with $\mathcal{G}m_\oplus=3.986004418\times 10^{14} m^3s^{-2}$, $\mu_\orbit$ is the constant angular momentum of the orbit as defined in Section \ref{sec:orbitmotion}, and $\mathcal{MA}_\orbit(t=0))$ is the Mean Anomaly at the beginning of the mission. In orbital mechanics, this Mean Anomaly $\mathcal{MA}_\orbit(t)$ is related to the orbit eccentricity $e_\orbit$ and eccentric anomaly $E$ through the Mean Anomaly map $\textbf{MA}(E): \mathbb{R} \rightarrow \mathbb{R}$ as $\mathcal{MA}_\orbit(t)=\textbf{MA}(E)=E-e_\orbit \sin(E)$.
By defining $\textbf{MA}^{-1}(t)$ as the inverse of this map:
\begin{equation}
    E(t)=\textbf{MA}^{-1}(\mathcal{MA}(t)). \label{eccentricanomaly}
\end{equation}
which at each time $t$ provides the Eccentric Anomaly. Finally, the true anomaly of the undisturbed orbit is found from \citep{curtis2013orbital}:
\begin{equation}
    \theta(t)=2 \tan^{-1}\Big(\frac{1-e_\orbit}{1+e_\orbit}\tan(\frac{E(t)}{2})\Big).\label{trueanomaly}
\end{equation}
Knowing the initial true anomaly $\bar{\theta}$ and as a result the initial Eccentric anomaly $E(\theta)$ and Mean Anomaly $\mathcal{MA}(t=0)=E(\bar{\theta}-e_\orbit sin(E(\bar{\theta}))$, by substituting $\mathcal{MA}(t)$ from \eqref{meananomaly} into \eqref{eccentricanomaly} and the resulting $E(t)$ from \eqref{eccentricanomaly} into \eqref{trueanomaly}, we can find \eqref{noninertial:trueanomalyevolution}.\qed

\section{Proof of Lemma \ref{orbitkinlemma}} \label{app:relposevel}
Knowing $\theta(t)$ at each instant, the absolute pose of the orbital frame ${g}^\oplus_\orbit(\theta) \in \textbf{SE}(2)$ and the quasi-inertial frame ${g}^\oplus_I \in \textbf{SE}(2)$ with respect to the perifocal inertial frame is found as \citep{curtis2013orbital}:
\begin{equation}
    {g}^\oplus_\orbit(\theta)=\begin{bmatrix} R^\oplus_\orbit(\theta) & {}^\oplus p^\oplus_\orbit(\theta)\\ \mathbb{O}_{1\times3} & 1\end{bmatrix} \quad \& \quad {g}^\oplus_I=\begin{bmatrix} R^\oplus_I(\bar{\theta}) & {}^\oplus p^\oplus_I(\bar{\theta})\\ \mathbb{O}_{1\times3} & 1\end{bmatrix}, 
\end{equation}
where 
$R^\oplus_\orbit(\theta)={0.65cm}{\begin{bmatrix}\cos(\theta) & \sin(\theta)\\ -\sin(\theta) & \cos(\theta)\end{bmatrix}}$ is the relative rotation of the orbital frame to the perifocal frame, the linear position ${}^\oplus p^\oplus_\orbit(\theta)=\frac{\mu_\odot ^2/ \mathcal{G}m_\oplus}{1+e_\odot \cos(\theta )}\begin{bmatrix}\cos(\theta)& \sin(\theta)\end{bmatrix}^T$ is found from the well-known Kepler's equations \citep{curtis2013orbital} for the evolution of the orbital radius \citep{curtis2013orbital}. 
Thus, the absolute pose of the orbital frame with respect to the quasi-inertial frame ${g}^I_\orbit(\theta) \in \textbf{se}(2)$ is found as \citep{curtis2013orbital}:
\begin{equation}
    {g}^I_\orbit(\theta)=(g^\oplus_I(t))^{-1}g^\oplus_\orbit(\theta)=\begin{bmatrix} (R^\oplus_I)^{-1}R^\oplus_\orbit(\theta) & {}^\orbit p^I_\orbit(\theta)\\ \mathbb{O}_{1\times2} & 1\end{bmatrix}, 
\end{equation}
where
\begin{align}
    {}^\orbit p^I_\orbit(\theta)&=R^\oplus_\orbit({}^\oplus p^\oplus_\orbit(\theta)-{}^\oplus p^\oplus_I(t))\nonumber\\
    &= R^\oplus_\orbit({}^\oplus p^\oplus_\orbit(\theta)-{}^\oplus p^\oplus_I(\bar{\theta})-\bar{v} t)\nonumber\\
    &=(\frac{\mu_\odot ^2}{\mathcal{G}m_\oplus})\begin{bmatrix}\cos(\theta) & \sin(\theta) \\ -\sin(\theta) & \cos(\theta) \end{bmatrix}\nonumber\\
    &~~~~~~~~~~~~~~~\begin{bmatrix}\frac{\cos(\theta)}{1+e_\odot \cos(\theta )}-\frac{\cos(\bar{\theta})}{1+e_\odot \cos(\bar{\theta} )}-\bar{v}_x t\\ \frac{\sin(\theta)}{1+e_\odot \cos(\theta )}-\frac{\sin(\bar{\theta})}{1+e_\odot \cos(\bar{\theta} )}-\bar{v}_y t\end{bmatrix}
\end{align}
where $\bar{v}={}^\oplus v^\oplus_I={}^\oplus v^\oplus_\orbit(t=0)=\begin{bmatrix}\bar{v}_x&\bar{v}_y\end{bmatrix}^T$ is constant perifocal velocity of the quasi-inertial frame.
Similar to the configuration of the orbit, the orbital velocity of an undisturbed orbit with the constant orbital momentum $\mu_\orbit$ with respect to the Earth-fixed inertial frame represented in the orbital frame ${}^\oplus \V^\oplus_\orbit=\iota_\orbit^T{}^{\oplus}V^{\oplus}_\orbit$ is a function of the true anomaly $\theta$, which itself is exclusively dependent on the time parameter $t$ through \eqref{noninertial:trueanomalyevolution}:
\begin{align}
     &{}^{\oplus}\V_{\orbit}^{\oplus}=\frac{\mathcal{G}m_\oplus}{\mu_\orbit}\begin{bmatrix}  - \sin(\theta)\\e_\orbit +\cos(\theta)\\\frac{\mathcal{G}m_\oplus (1+e_\orbit \cos(\theta))^2}{\mu_\orbit^2}\end{bmatrix} \nonumber\\
     &{}^{\orbit}\V_{\orbit}^{\oplus}=\frac{\mathcal{G}m_\oplus}{\mu_\orbit}\begin{bmatrix}  e_\orbit \sin(\theta)\\1+e_\orbit\cos(\theta)\\\frac{\mathcal{G}m_\oplus (1+e_\orbit \cos(\theta))^2}{\mu_\orbit^2}\end{bmatrix},
\end{align}
where $e_\orbit$ is the eccentricity of the orbit's ellipse, $\mathcal{G}m_\oplus$ is the gravitational constant of the Earth \citep{curtis2013orbital}, and the angular velocity of the orbital frame $\dot{\theta}=\frac{\mu_\orbit}{|\oplus p^\oplus_\orbit|^2}=\frac{(\mathcal{G}m_\oplus)^2 (1+e_\orbit \cos(\theta))^2}{\mu_\orbit^3}$\citep{curtis2013orbital} is the rate of change of the true anomaly .
The quasi-inertial frame $I$ moves with a constant linear velocity with respect to the perifocal frame $\oplus$ equal to the initial linear velocity of the orbital frame ${}^\oplus v_I^\oplus={}^{\oplus}\bar{v}^{\oplus}_\orbit={}^{\oplus}{v}^{\oplus}_\orbit(t=0)$ and ${}^\oplus w_I^\oplus=0$:
\begin{align}
    {}^\orbit \V^\oplus_I&=R_{z_\orbit}(\theta-\bar{\theta}){}^I\V^\oplus_I\nonumber\\
&=\frac{\mathcal{G}m_\oplus}{\mu_\orbit}\begin{bmatrix}  cos(\theta)+e_\orbit\cos(\theta-\bar{\theta})\\ -\sin(\theta)-e_\orbit \sin(\theta-\bar{\theta})\\0\end{bmatrix}
\end{align}
The orbit's velocity with respect to the quasi-inertial frame can be found:
\begin{equation}
    {}^\oplus\V_{\orbit}^I(\theta)={}^\oplus\V^\oplus_\orbit(\theta)-{}^\oplus\V_I^\oplus=\frac{\mathcal{G}m_\oplus }{\mu_\orbit}\begin{bmatrix}  -\sin(\theta)+\sin(\bar{\theta})\\ \cos(\theta)-\cos(\bar{\theta})\\\frac{\mathcal{G}m_\oplus} {\mu_\orbit^2}(1+e_\orbit \cos(\theta))^2\end{bmatrix}, \label{orbitvel2}
\end{equation}
\begin{align}
    {}^\orbit\V_{\orbit}^I(\theta)&=R_{z_\orbit}(\theta)\left({}^\oplus\V^\oplus_\orbit(\theta)-{}^\oplus\V_I^\oplus\right)\nonumber\\
    &=\frac{\mathcal{G}m_\oplus }{\mu_\orbit}\begin{bmatrix}  -\sin(\theta-\bar{\theta})\\ 1-\cos(\theta-\bar{\theta})\\\frac{\mathcal{G}m_\oplus} {\mu_\orbit^2}(1+e_\orbit \cos(\theta))^2\end{bmatrix}. \label{orbitvel3}
\end{align} \qed
\section{Proof of Lemma \ref{noninertial:lemmaPorbit}}\label{app:lemmaPorbit}
The  momentum $P_\orbit=M_0 \Ad_{g^\orbit_0}\iota_0 {}^\orbit\V^I_\orbit$ in Equation \eqref{noninertial:momentum} is found from:
\begin{align}
    P_\orbit
    &=M_0  \frac{\mathcal{G}m_\oplus }{\mu_\orbit}\Ad_{g^\orbit_0}\iota_\orbit\begin{bmatrix}  -\sin(\theta-\bar{\theta})\\ 1-\cos(\theta-\bar{\theta})\\\frac{\mathcal{G}m_\oplus} {\mu_\orbit^2}(1+e_\orbit \cos(\theta))^2\end{bmatrix}.
\end{align}
Taking the derivative of the above equation results:
\begin{align}
    \dot{P}_\orbit(\theta,q_\m)&=\frac{\mathcal{G}m_\oplus}{\mu_\orbit}\Bigg(\frac{dM_0}{dt}\Ad_{g^\orbit_0}\iota_\orbit \V_\orbit\nonumber\\
    &~+M_0\frac{d(\Ad_{g^\orbit_0})}{dt}\iota_\orbit \V_\orbit+M_0\Ad_{g^\orbit_0}\iota_\orbit \dot{\V}_\orbit\Bigg)\nonumber\\
    &=\frac{\mathcal{G}m_\oplus}{\mu_\orbit}\Bigg(\bigg(\sum^n_{i=1}\frac{\partial M_0}{\partial q_i}\dot{q}_i\bigg)\Ad_{g^\orbit_0}\iota_\orbit \V_\orbit\nonumber\\
    &~+M_0\left(\ad_{V_0}\Ad_{g^\orbit_0}\right)\iota_\orbit \V_\orbit\nonumber\\
    &+M_0\Ad_{g^\orbit_0}\iota_\orbit \frac{\mathcal{G}m_\orbit}{\mu_\orbit^2}\left(1+e \cos (\theta)\right)^2\nonumber\\
    &~~~~~~~~~~~~~~~~~~~\begin{bmatrix}
        -\cos(\theta-\bar{\theta})\\1+\sin(\theta-\bar{\theta})\\ -\frac{2 \mathcal{G}m_\oplus
        e_\orbit sin(\theta)(1+e_\orbit cos(\theta))}{\mu_\orbit^2}
        \end{bmatrix}\Bigg) \label{app:Pdot}
\end{align}
This completes the proof. \qed
\section{Proof of Lemma \ref{noninertial:varlemma}}\label{app:variations}
To show the equivalence of the first terms appearing in \eqref{noninertial:hamiltonforL} and \eqref{noninertial:hamiltonforlcm}, we need to compute the variation $\delta (\OOs)$ induced on $\q^\vee_0 \times T_{q_\m}Q_\m $ by the variation $\delta g^\orbit_0.$ This variation can be found from the chain rule as $\delta (\OOs)= \iota_\orbit \delta \V_\orbit + \delta V_0+ \delta (\A\dot{q}_\mathfrak{m})$, using the definition in \eqref{noninertial:lockedorbitalvelocitywrtorbit}.
We know from \citep{moghaddam2022} that for $\delta V_0=\delta \left((g^\orbit_0)^{-1}\dot{g}^I_\orbit \right)$ we have $\delta \hat{V}_0 =(\dot{{\eta_0}}+ \ad_{V_0} {\eta_0})^\wedge$, where the $\ad_{V_0}: \q^0_0 \rightarrow \q^0_0$ operator for the Lie algebra $\q_0^0 \in \g^0_0$ is defined in \citep{moghaddam2022}. Knowing 
that $\delta \V_\orbit=0$ and $\delta g^I_\orbit=0$ for a known kinematic evolution of the orbital frame in \eqref{orbitdyn} and \eqref{orbitvel}, we can find $\delta (\OOs)= \dot{\eta_0}+\ad_{V_0} {\eta_0} + \A\delta \dot{q}_\m+\delta \A\dot{q}_\m$.
We also know from \citep{moghaddam2022} that
\begin{align*}
        \left<F_0,\delta g_0^\orbit\right>&=\left<F_{0},T_{\mathbb{I}}L_{g^\orbit_0}\eta_0\right>\\
        &=\left<T_{\mathbb{I}}^*L_{g_0^\orbit}F_{0},\eta_0\right>=\left<F_{\eta_0},\eta_0\right>
\end{align*}
This completes the proof. \qed

\section{Proof of Theorem \ref{noninertialtheorem:EoM}} \label{app:prooftheorem}
The EoM of a space-manipulator system with the Lagrangian in \eqref{noninertial:lagrangianfirst} between two fixed configurations $q_0=q(t_0)$ and $q_f=q(t_F)$ under the effect of the applied force $F=(F_0,F_m)\in T^*Q$ can be formed using the Lagrange-d'Alambert principle
for a variation of the type $\delta q$ with fixed endpoints, i.e., $\delta q(t_0)=\delta q(t_f)=0$. 
Based on Lemma \ref{noninertial:varlemma}, the reduced Lagrangian $\ell^{cm}$ then has to satisfy the Lagrange-d'Alembert principle in \eqref{noninertial:hamiltonforlcm} for variations defined in \eqref{noninertial:varomega0} and forces defined in \eqref{noninertial:f.mhat} \citep{blochbook}.
Using the chain rule, and recognizing that $l^{cm}=k^{cm}(\OOs,q_\m,\dot{q}_\m)-u^{cm}(q_\m) $, the Lagrange-d'Alambert principle in \eqref{noninertial:hamiltonforlcm} can be re-written as:
\begin{align}
    &\int \Bigg(\underbrace{\left<\frac{\partial k^{cm}}{\partial\OOs}, \delta (\OOs)\right>}_{(I)}\nonumber\\
    & ~~
    +\underbrace{\left<\frac{\partial k^{cm}}{\partial q_\mathfrak{m}}, \delta q_\mathfrak{m}\right>+\left<\frac{\partial k^{cm}}{\partial\dot{q}_\mathfrak{m}}, \delta \dot{q}_\mathfrak{m}\right>}_{(II)}\Bigg)dt\nonumber\\
    &-\int (\underbrace{\left<F_{\eta_0},\eta_0\right>}_{(III)} +\underbrace{\left<F_\m+\frac{\partial u}{\partial q_\m},\delta q_\m\right>}_{(IV)})dt=0, 
        \label{noninertial:variationequation}
\end{align}
The term $(I)$ in \ref{noninertial:variationequation} can been expanded as
\begin{align}
    (I) 
    =\int   \Bigg(&\underbrace{\left<P_0+P_\orbit,\dot{\eta}_0+ \ad_{V_0} \eta_0\right>}_{(I.A)} \nonumber\\
    &+\underbrace{\left<P_0+P_\orbit,\delta (\A \dot{q}_\m)\right>}_{(I.B)}\Bigg)dt 
    \label{noninertial:I}
\end{align}
As detailed in \citep{moghaddam2022} the term $(I.A)$ can be expanded as: 
\begin{equation}
    (I.A) = \int   \left<\underbrace{-\dot{P}_0+\textbf{ad}^T_{V_0}P_0}_{(I.A.1)}\underbrace{-\dot{P}_\orbit+\textbf{ad}^T_{V_0}P_\orbit}_{\hat{F}_\orbit},\eta_0\right>
    dt.
    \label{noninertial:I.A}
\end{equation}
Collecting the term $(I.A.1)$  from \eqref{noninertial:I.A} and the disturbing terms $\hat{F}_\orbit$ and $F_{\eta_0}$ from \eqref{noninertial:I.A} and \eqref{noninertial:variationequation}, respectively,  and noticing the arbitrariness of variation $\eta_0$ 
provides the Euler-Poincar\'{e} formula for the dynamics of the spacecraft in \eqref{noninertial:EoM}.
Similarly, the term $(I.B)$ can be expanded as detailed in \citep{moghaddam2022}: 
\begin{align}
(I.B)
    &=\int\left(\left<P_0+P_\orbit,\delta\A \dot{q}_\m+\A \delta \dot{q}_\m\right>\right)dt\nonumber\\
    &=\int\Bigg(\underbrace{\left<P_0+P_\orbit,\delta\A \dot{q}_\m\right>}_{(I.B.1)}\nonumber\\
    &~~~~~~~-\underbrace{\left<\dot{\A}^T(P_0+P_\orbit),\delta q_\m\right>}_{(I.B.2)}\nonumber\\
    & ~~~~~~~-\underbrace{\left<\A^T(\dot{P}_0+\dot{P}_\orbit),\delta q_\m\right>}_{(I.B.3)}\Bigg)dt
\end{align}
where again, the terms $(I.B.1)$, $(I.B.2)$, and $(I.B.3)$ can be expanded according to \citep{moghaddam2022} as:
\begin{align}
    (I.B.1)&=\int \left<P_0+P_\orbit,\begin{bmatrix}\frac{\partial \A}{\partial q_1} \delta q_\m \cdots \frac{\partial \A}{\partial q_n} \delta q_\m\end{bmatrix}\dot{q}_\m\right>dt\nonumber\\
    &=\int \left<P_0+P_\orbit,\begin{bmatrix}\frac{\partial \A}{\partial q_1} \dot{q}_\m \cdots \frac{\partial \A}{\partial q_n} \dot{q}_\m\end{bmatrix}\delta {q}_\m\right>dt\nonumber\\
    &=\int \left<\begin{bmatrix}\dot{q}_\m ^T(\frac{\partial \A}{\partial q_1})^T \\\vdots \\\dot{q}_\m^T(\frac{\partial \A}{\partial q_n})^T \end{bmatrix}(P_0+P_\orbit),\delta q_\m\right>dt\\
    (I.B.2)
    &=\int \Bigg<-\left(\sum_{i=1}^n (\frac{\partial \A}{\partial q_i}) \dot{q}_i\right)^TP_0\nonumber\\
    &~~~~~~~~~~~~~~-\left(\sum_{i=1}^n (\frac{\partial \A}{\partial q_i}) \dot{q}_i\right)^TP_\orbit,\delta q_\m\Bigg>dt\\
    (I.B.3)
    &=\int\big<-\A^T\textbf{ad}^T_{V_0}P_0 - \A^T\textbf{ad}^T_{V_0}P_\orbit \nonumber\\
    &~~~~~~~~~~+\A^TF_\eta,\delta q_\m\big>dt
\end{align}
The term $(II)$ in \eqref{noninertial:variationequation} has also been rigorously expanded in \citep{moghaddam2022} as:
\begin{align}
    (II) 
    =&\int \big<\frac{1}{2}\frac{d}{dt}\left(\dot{q}_\m^T \hat{M}_\mathfrak{m}\dot{q}_\mathfrak{m}\right)\nonumber\\
    &~~~~~~~~~-\frac{1}{2}\frac{d}{dt}\left((P_0+P_\orbit)^T {M}_0^{-1}(P_0+P_\orbit)\right)\nonumber\\
    &~~~~~~~~~~~~~~~~~~~~~~~~~~+\dot{\hat{M}}_\m \dot{q}_\m+\hat{M}_\m \ddot{q}_\m,\delta q_\m\big>dt\nonumber\\
    =&\int \Bigg<\underbrace{\frac{1}{2}\begin{bmatrix}\frac{\partial \hat{M}_\mathfrak{m}}{\partial q_1}\dot{q}_\m & \cdots & \frac{\partial \hat{M}_\mathfrak{m}}{\partial q_n}\dot{q}_\m\\\end{bmatrix}^T}_{\hat{C}_1}\dot{q}_\mathfrak{m}\nonumber\\
    &~~~~~~~-\underbrace{\frac{1}{2}\begin{bmatrix}(P_0+P_\orbit)^T\frac{\partial {M}_0^{-1}}{\partial q_1}\\\vdots \\ (P_0+P_\orbit)^T\frac{\partial {M}_0^{-1}}{\partial q_n}\end{bmatrix}(P_0+P_\orbit)}_{\hat{N}_1}\nonumber\\
   & ~~~~~~~~~+\underbrace{\left(\sum_{i=1}^{n}\frac{\partial \hat{M}_\mathfrak{m}}{\partial{q}_i}\dot{q}_i\right)}_{\hat{C}_2}+\hat{M}_\m \ddot{q}_\m,\delta q_\m\Bigg>dt\nonumber\\
    &=\int \Big<{\hat{M}}_\mathfrak{m}\ddot{q}_\mathfrak{m}\nonumber\\
    &~~~~~~~~~~+\underbrace{\left((\sum_{i=1}^{n}\frac{\partial \hat{M}_\mathfrak{m}}{\partial{q}_i}\dot{q}_i)+\frac{1}{2}\begin{bmatrix}\dot{q}_\m^T\frac{\partial \hat{M}_\mathfrak{m}}{\partial q_1}^T\\ \vdots \\ \dot{q}_\m^T\frac{\partial \hat{M}_\mathfrak{m}}{\partial q_n}^T\\\end{bmatrix}\right)}_{\hat{C}_\m}\dot{q}_\mathfrak{m}\nonumber\\
    &~~~~~~~~~~-\underbrace{\frac{1}{2}\begin{bmatrix}\frac{\partial \hat{M}_0^{-1}}{\partial q_1}P_0\cdots \frac{\partial \hat{M}_0^{-1}}{\partial q_n}P_0\\\end{bmatrix}^TP_0}_{\hat{N}_{0,2}}\nonumber\\
    &~~~~~~~~~~-\underbrace{\frac{1}{2}\begin{bmatrix}\frac{\partial {M}_0^{-1}}{\partial q_1}P_\orbit\cdots \frac{\partial {M}_0^{-1}}{\partial q_n}P_\orbit\\\end{bmatrix}^TP_\orbit}_{\hat{N}_{\orbit,1}}\nonumber\\
    &~~~~~~~~~~ -\underbrace{\begin{bmatrix}\frac{\partial {M}_0^{-1}}{\partial q_1}P_0\cdots \frac{\partial {M}_0^{-1}}{\partial q_n}P_0\\\end{bmatrix}^TP_\orbit}_{\hat{N}_{\orbit,2}},\delta q_\m\Big>dt\label{noninertial:N2}
\end{align}
Collecting the terms $\hat{M}_\m$ and $\hat{C}_\m$ here completes the calculation of $\hat{M}_\m$ and $\hat{C}$ in \eqref{noninertial:EoM} and \eqref{noninertial:coriolis}. The $\hat{N}_0$ matrix as presented in \eqref{noninertial:Nhat0} is formed by collecting the terms $\hat{N}_{0,1}$ and $\hat{N}_{0,2}$  from \eqref{noninertial:I.A} and \eqref{noninertial:N2}.
Similar to the term $(I.A)$, the term $(I.B)$ in \eqref{noninertial:I} can be expanded using the definition of $P_\orbit$ in \eqref{noninertial:momentum}:
\begin{align}
    (I.B) = 
        \int \left<\underbrace{-\dot{P}_\orbit+\textbf{ad}^T_{V_0}P_\orbit}_{\hat{F}_\orbit},\eta_0\right> dt,
        \label{noninertial:I.B}
\end{align}
where $\dot{P}_\orbit$ is given in \eqref{app:lemmaPorbit}. 
Collecting the terms $(I.A.1)$, $(I.B)$ and $(III)$ from \eqref{noninertial:I.A}, \eqref{noninertial:I.B} and \eqref{noninertial:variationequation}, respectively, expanding the forcing term based on \eqref{f.m}, and noticing the arbitrariness of variation $\eta_0$ 
provides the Euler-Poincar\'{e} formula for the dynamics of the spacecraft in the first line of \eqref{noninertial:EoM}. 
The term $(I.C)$ in \eqref{noninertial:I.A} is expanded similar to the derivations in \citep{moghaddam2022}: 
\begin{align}
        (I.C)
        &= \int (-\left<\A^T\dot{P}_\orbit,\delta q_\mathfrak{m}\right>-\left<\dot{\A}^TP_\orbit,\delta q_\mathfrak{m}\right>\nonumber\\
        &~~~~~+\left<P_\orbit, \delta \A\dot{q}_\mathfrak{m}\right>)dt\nonumber\\
        &= \int \big<-\A^T\dot{P}_\orbit-\left(\sum_{i=1}^n (\frac{\partial \A}{\partial q_i}) \dot{q}_i\right)^TP_\orbit\nonumber\\
        &~~~~~+\underbrace{\begin{bmatrix}
       \frac{\partial \A}{\partial q_1}\dot{q}_\m & \cdots &\frac{\partial \A}{\partial q_n}\dot{q}_\m\end{bmatrix}^TP_\orbit}_{\hat{N}_{\orbit,3}},\delta q_\mathfrak{m}\big>dt
       \label{noninertial:I.C2} 
\end{align}
where the evolution of $\dot{P}_\orbit$ can be found from Lemma \ref{noninertial:lemmaPorbit}. The $\hat{N}_\orbit$ matrix as presented in \eqref{noninertial:Nhatorbit} is found by collecting $\hat{N}_{\orbit,1}$, $\hat{N}_{\orbit,2}$, and $\hat{N}_{\orbit,3}$ from \eqref{noninertial:N2}, and \eqref{noninertial:I.C2}, respectively.

\end{document}